\def\BibTeX{{\rm B\kern-.05em{\sc i\kern-.025em b}\kern-.08em
    T\kern-.1667em\lower.7ex\hbox{E}\kern-.125emX}}
\newcommand{\obs}{\text{obs}}
\newcommand{\HD}{\text{HD}}
\newcommand{\old}{\text{old}}
\newtheorem{theorem}{Theorem}
\def\BibTeX{{\rm B\kern-.05em{\sc i\kern-.025em b}\kern-.08em
    T\kern-.1667em\lower.7ex\hbox{E}\kern-.125emX}}
\begin{document}

\title{Decentralized Consensus Inference-based Hierarchical Reinforcement Learning for Multi-Constrained UAV Pursuit-Evasion Game
\\

}
\author{Yuming Xiang, Sizhao Li, Rongpeng Li, Zhifeng Zhao, and Honggang Zhang

\thanks{
    *This work 
    has been accepted by IEEE Transactions on Neural Networks and Learning Systems in June 2025, and is supported in part by the National Key R\&D Program of China under Grant 2024YFE0200600, and the Zhejiang Provincial Natural Science Foundation of China under Grant LR23F010005.
    
    Yuming Xiang and Sizhao Li contribute equally.
    Yuming Xiang and Sizhao Li and Rongpeng Li are with the College of Information Science and Electronic Engineering, Zhejiang University, Hangzhou 310027, China (e-mail: \{xiangym1999; liszh5; lirongpeng\}@zju.edu.cn).
   
   Zhifeng Zhao is with Zhejiang Lab, Hangzhou, China as well as the College of Information Science and Electronic Engineering, Zhejiang University, Hangzhou 310027, China (e-mail: zhaozf@zhejianglab.com).

   Honggang Zhang is with City University of Macau, Macau, China (email: hgzhang@cityu.edu.mo).
  }
\vspace{-20pt}
}

\maketitle

\begin{abstract}
Multiple quadrotor unmanned aerial vehicle (UAV) systems have garnered widespread research interest and fostered tremendous interesting applications, especially in multi-constrained pursuit-evasion games (MC-PEG).
The Cooperative Evasion and Formation Coverage (CEFC) task, where the UAV swarm aims to maximize formation coverage across multiple target zones while collaboratively evading predators, belongs to one of the most challenging issues in MC-PEG, especially under communication-limited constraints.
This multifaceted problem, which intertwines responses to obstacles, adversaries, target zones, and formation dynamics, brings up significant high-dimensional complications in locating a solution.
In this paper, we propose a novel two-level framework (i.e., Consensus Inference-based Hierarchical Reinforcement Learning (CI-HRL)), which delegates target localization to a high-level policy, while adopting a low-level policy to manage obstacle avoidance, navigation, and formation.
Specifically, in the high-level policy, we develop a novel multi-agent reinforcement learning module, Consensus-oriented Multi-Agent Communication (ConsMAC), to enable agents to perceive global information and establish consensus from local states by effectively aggregating neighbor messages.
Meanwhile, we leverage an Alternative Training-based Multi-agent proximal policy optimization (AT-M) and policy distillation to accomplish the low-level control.
The experimental results, including the high-fidelity software-in-the-loop (SITL) simulations, validate that CI-HRL provides a superior solution with enhanced swarm's collaborative evasion and task completion capabilities.
\end{abstract}

\begin{IEEEkeywords}
Cooperative Evasion and Formation Coverage,
Multi-Agent Reinforcement Learning,
Multi-quadcopter motion planning,
Hierarchical model,
Decentralized consensus inference
\end{IEEEkeywords}

\section{Introduction}
Nowadays, quadrotor Unmanned Aerial Vehicles (UAVs) have 
demonstrated great potential in costly or human-unfriendly tasks (e.g., disaster response \cite{Marwan2024Meta}), 
due to their agility, cost-effectiveness, and compact size.
Nevertheless, the UAV swarm is likely to be exposed to an adversarial environment, where a hostile factor or agent might attack the affiliated members, and must respond promptly to boost the survival opportunity. 
Typically, such a Cooperative Evasion and Formation Coverage (CEFC) scenario is formulated as a Multi-Constrained Pursuit-Evasion Game (MC-PEG) \cite{Pham2017Distributed},
wherein preys (i.e., UAVs) shall maximize the ratio of accomplishing planned missions while avoiding attacks from predators (i.e., the hostile attacker)\cite{hu2024transfer}.
However, as the complexity of MC-PEG or environmental constraints escalates, conventional control algorithms face several aspects of prominent shortcomings, such as oversimplified predator strategies (e.g., fixed trajectory)\cite{Young2020Consensus}, lack of inter-group cooperation\cite{Vibhav2022Multi}, limitation to one single formation pattern \cite{konda2020decentralized}, and unrealistic presumption on the availability of global information\cite{Zhang2023RealCity,yuan2025multiagent}.

\begin{table}[t]
    \vspace{-0em}
    \centering
    \caption{The key abbreviation list.}
    \label{tab:keyabbreviation}
    \vspace{-.3em}
    \begin{tabular}{c|m{6.1cm}}
    \toprule
    \textbf{Abbreviation} & \textbf{Full Name} \\
    \midrule
     AT-M & Alternative Training-based Multi-agent proximal policy optimization \\
     CEFC & Cooperative Evasion and Formation Coverage \\
     CI-HRL & Consensus Inference-based Hierarchical Reinforcement Learning \\
     ConsMAC & Consensus-oriented Multi-Agent Communication \\
     MC-PEG &  Multi-Constrained Pursuit-Evasion Games \\
     SITL & Software-In-The-Loop \\

    \bottomrule
    \end{tabular}
    \vspace{-1.8em}
\end{table}

Benefiting from the robust adaptability in complex environments, Multi-Agent Reinforcement Learning (MARL)-based approaches\cite{lowe2017multi, yan2023collision, chai2023nvif} have been widely adopted. 
For instance, QMIX\cite{rashid2020monotonic} and its variants\cite{guan2022efficient} have yielded appealing results in complex multi-agent scenarios, and HASAC\cite{liu2024maximum}, as the latest state-of-the-art (SOTA) algorithm,
has also demonstrated outstanding capabilities in many benchmarks.
While these advancements are noteworthy, recent MARL implementations in practical systems still heavily rely on Multi-Agent Proximal Policy Optimization (MAPPO) \cite{schulman2017proximal,yu2022surprising}  due to its stability and computational efficiency\cite{  xie2024multi}.
However, existing MAPPO-based multi-robot control frameworks \cite{yuan2025multiagent} typically adopt oversimplified assumptions (e.g., global obstacle coordinates, small-scale scenarios), limiting their applicability to real-world environments.
Nevertheless, for complex multi-tasks or MC-PEG, direct application of MAPPO often fails to achieve satisfactory convergence\cite{yan2022relative}, necessitating the employment of advanced methodologies (e.g., hierarchical models\cite{jin2021hierarchical} or alternative training\cite{lu2023self}).

Within the framework of MARL, the Centralized Training with Decentralized Execution (CTDE) architecture acts as a foundational solution \cite{lowe2017multi}. Accordingly, many variants of CTDE \cite{zhu2022survey} have been proposed to improve the execution performance by devising a communication module to allow agents to explicitly or implicitly exchange their local information during the training.
Contingent on a communication network or leader-follower assumption, these MARL approaches generally face some communication-performance dilemmas. For example, \cite{guan2022efficient} unveils that a globally shared observation might generate a significant amount of redundant information, and even yield less competitive results than the case with partial local observation only. Therefore, it becomes critical to design some consensus inference algorithms to effectively guide the interaction between agents. In that regard, conventional algorithms\cite{amirkhani2022consensus} generally formulate an optimization problem and utilize control theory to produce a solution. Nevertheless, these algorithms lack the essential flexibility, and cannot be easily merged into MARL methods\cite{yang2021data}. Meanwhile, attributed to a black-box deep neural network (DNN), the communication module in \cite{sukhbaatar2016learning, das2019tarmac} only transmits the local information in a blunt manner, which could not unleash its potential to the full extent (e.g., unable to infer and forward global information during the execution) and fails to filter the meaningful communication content, thus being less competent to handle complex scenarios. As a remedy, the opponent modeling approaches \cite{kim2020communication, wang2021tomc} interpret the communication content as the speculated future actions of other agents. However, in partially observable scenarios, this approach suffers from speculation inconsistency, as individual agents possibly make different speculations according to their local observations\cite{xu2023consensus}. Thus, it remains challenging to infer the consensus, and perceive consistent information from diversified, limited local information.  
Alternatively, Hierarchical Reinforcement Learning (HRL) is considered to effectively deal with these underlying difficulties \cite{ pateria2021hierarchical}.
In HRL, high-level policies are cooperatively learned to focus on subgoals (e.g., to which position), while the low-level policies are designed for completing subgoal-related basic operations (e.g., specific movements, obstacle avoidance)\cite{jin2021hierarchical}.
The coordinated interplay between two-level policies often achieves effects in complex tasks that surpass the capabilities of a single-level structure\cite{peng2022ase}.
Nevertheless, integrating consensus mechanisms into HRL adds to the training difficulty, especially when agents must make decisions based on incomplete and diverse local observations.

In this paper, towards the CEFC control in a partially observed environment, we propose a novel decentralized Consensus Inference-based Hierarchical Reinforcement Learning (CI-HRL) framework.
To be specific, to tackle the global collaboration problem of agents with local observations, we incorporate a hierarchical CTDE architecture with both high- and low-levels. In particular, the high-level policy is designed to select appropriate anchor points (i.e., target positions), according to the state of neighbors and predators on top of Consensus-oriented Multi-Agent Communication (ConsMAC), while the low-level policy, which is implemented by alternative training and policy distillation, is responsible for adaptive formation navigation and obstacle avoidance mandated by the high-level decision.
Compared with the existing work, the contribution of our paper can be summarized as follows.
\begin{itemize}
\item We present a hierarchical framework for a single-pursuer-multiple-evader MC-PEG. 
Specifically, the high-level policy provides appropriate anchor points and determines the target selection policy, while the low-level policy resolves motion control (i.e., formation, navigation, and obstacle avoidance), enabling UAVs to navigate and adapt to dynamic and uncertain CEFC environments effectively.
\item On top of Alternative Training-based MAPPO (AT-M), we implement an efficient multi-policy-distilled model for low-level decentralized adaptive formation with obstacle avoidance, which is capable of adapting to agent quantity changes, reducing the training cost, and improving the obstacle avoidance and formation performance. 
\item The high-level policy learns a distributed target selection and division policy, based on an inter-agent unified understanding of the current global state provided by ConsMAC. Notably, the high-level RL-based module and ConsMAC are trained alternately to align consensus inference and policy making.
\item Through extensive simulations in both multi-agent particle environment (MPE) and software-in-the-loop (SITL) environment in Gazebo, we demonstrate the effectiveness and superiority of our framework over existing models.
\end{itemize}

To improve readability, we summarize a list of abbreviations in Table \ref{tab:keyabbreviation}.
The remainder of the paper is organized as follows. Sec. \ref{sec:related} briefly introduces the related works. Sec. \ref{sec:system} presents the system model and formulates the problem.
Sec. \ref{sec:ci-hrl} provides the details of the proposed framework. 
In Sec. \ref{sec:exp}, we introduce the experimental results and discussions. Finally, Sec. \ref{sec:final} concludes the paper.

\section{Related work}\label{sec:related}
\subsection{DRL for PEG-based UAVs System}\label{sec:PEG}

The PEG has been extensively studied in UAV systems due to its flexible requirements, and DRL has been proven effective for environmental awareness and decision control capacity \cite{xu2022autonomous,Zhang2023RealCity,Zhang2023Games}.
However, the literature above only focuses on a single, simplified evader, and is contingent on an over-idealistic full connectivity assumption. 
For example, Ref. \cite{Young2020Consensus} proposes a hierarchical system integrating flocking control and RL for multi-agents to evade a pre-defined pursuer. But it implements one invariant formation and oversimplifies the pursuer policy. Ref.  \cite{Yang2023LargScale} combines Mix-Attention and Independent PPO (IPPO) algorithm to enhance the agents' adaptation in the multi-pursuer multi-evader PEG. Notably, these methods 
neglect the cooperation among evaders and have not considered the downstream tasks such as target covering and formation maintaining. While Ref. \cite{deng2020multi} designs a collaborative pursuit-defense strategy in a fire-fighting task, it does not take account of the possible existence of obstacles and assumes full connectivity.
Different from these existing works, 
we address downstream tasks for the communication-limited UAV swarm, and aim to develop a decentralized policy that optimizes the completion of the CEFC task while considering communication constraints and collision avoidance.

\subsection{Hierarchical Reinforcement Learning}
HRL\cite{pateria2021hierarchical} excels in 
simplifying complex tasks into manageable subtasks for long-term, multi-step problem-solving.
Classical research in HRL aims to optimize the discovery and use of subtasks and to develop algorithms supporting hierarchical structure learning\cite{dayan1992feudal, dietterich2000hierarchical}. 
Option-based methods \cite{bacon2017option} enable agents to dynamically discover subtasks through environmental exploration, merging into a hierarchical policy. However, these approaches may incur higher exploration and training expenses, and the discovered subtasks might be incomplete or suboptimal, potentially impacting overall performance quality. 
In our scenario, UAVs must consider both formation obstacle avoidance and cooperative pursuit avoidance, without requiring complex skill learning. To address this, we opt for classical hierarchical policy learning, where subtasks are predefined. Here, the low-level policy is trained independently and then integrated into the training of the high-level policy. This approach avoids the mutual influence and excessive difficulty of concurrently training multiple policies 
\cite{levy2017learning}.

\subsection{Multi-Agent Reinforcement Learning with Communication}
To guarantee agents with only local information to cooperate meaningfully, CTDE is widely adopted in recent MARL methods\cite{lowe2017multi}.
However, the partial observability in a CTDE-based multi-agent environment can undermine the coordination between agents \cite{xu2023consensus}. With few exceptions like HASAC \cite{liu2024maximum}, most recent works introduce a communication module to effectively mitigate this challenge \cite{zhu2022survey}.
For example, TarMAC\cite{das2019tarmac} leverages the attention mechanism in the communication policy to aggregate the messages from their neighbors.
However, these models do not clearly define the content and significance of communication. Instead, they treat the communication network as a black box, thus reducing message interpretability and compromising their effectiveness in managing complex scenarios. To address this, recent studies have incorporated opponent modeling to better interpret communication content.
In IS\cite{kim2020communication} and ToM2C\cite{wang2021tomc}, each agent is designed to predict the future actions of its teammates, and utilizes these predictions as the substance of communication messages. However, in partially observable scenarios, agents possibly derive varied conjectures based on their individual perceptions, and this speculation heterogeneity could potentially disrupt the agent's decision-making process.
Meanwhile, MASIA\cite{guan2022efficient} and NVIF\cite{chai2023nvif} employ supervised learning and autoencoder to reduce the redundancy of communication.
Particularly, NVIF compresses local information without linking it to global information, whereas MASIA preserves the global state but assumes full communication. Under limited communication, merely transmitting raw observations through MASIA fails to convey sufficient information for overall movement trends
. In contrast, ConsMAC helps agents combine local observations and communication messages to infer a consensus on the global state, effectively addressing these challenges.

\begin{figure}[tbp]
\centering
\includegraphics[width = 0.47\textwidth]{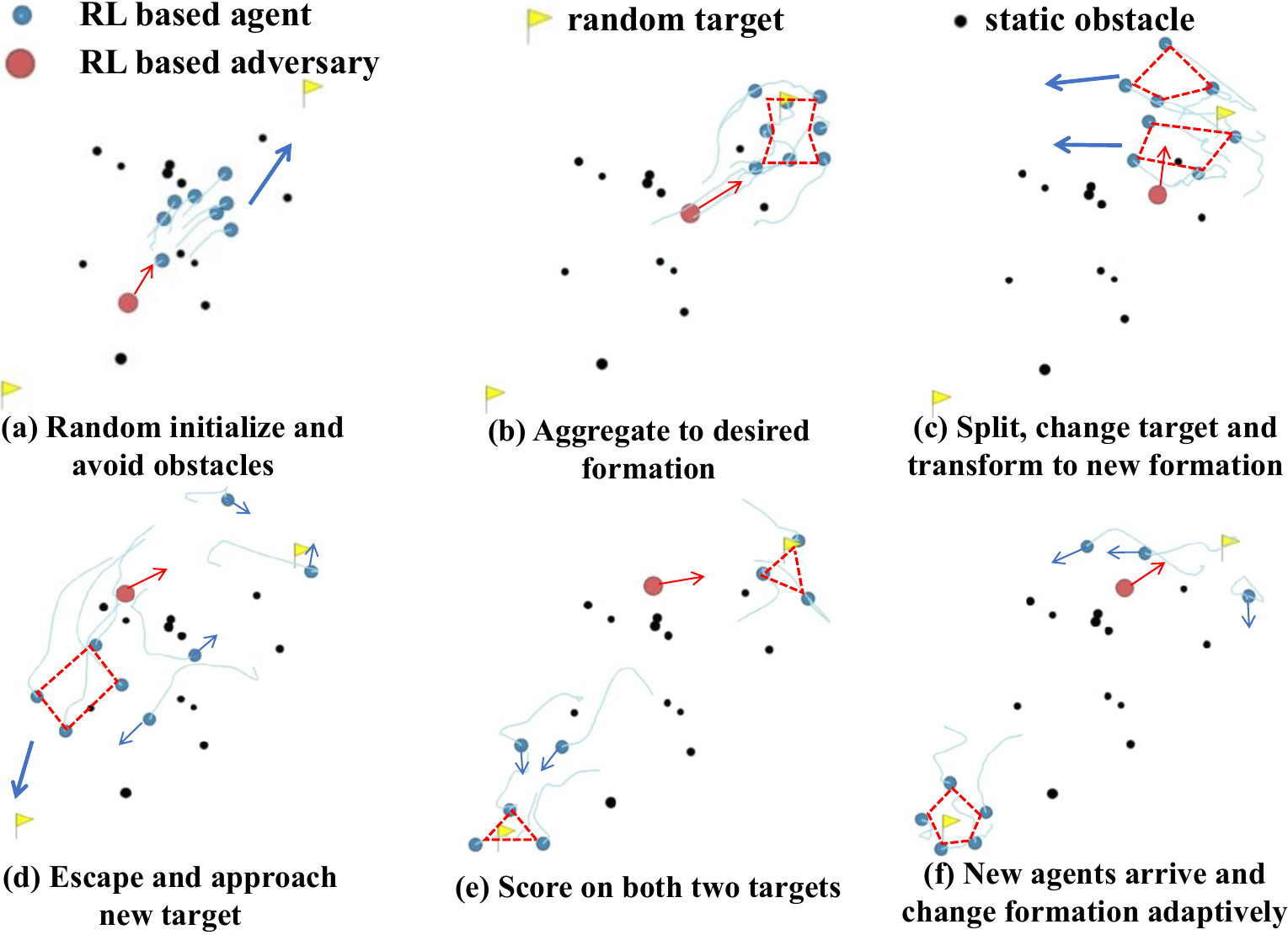}
\vspace{-.5em}
\caption{An illustration of the CEFC task, wherein the UAV swarm (in blue dots) with some formation pattern is required to automatically respond to the adversary (e.g., wildfire, in the red circle) and obstacles (in the black circle), and move towards the multiple target areas (shown as the yellow flag) to carry out their missions (e.g., dropping supplies) in a communication-limited decentralized manner.}
\label{fig:MPEOverview}
\vspace{-.5em}
\end{figure} 

\section{Preliminaries and System Model}\label{sec:system}
\subsection{System Model}
In this article, we consider a PEG scenario where a well-formed swarm of UAVs flies and attempts to reach target areas across an obstacle-cluttered space. In particular, the UAV flock is fully decentralized with a limited communication range. 

\subsubsection{Quadcopter Model}\label{sec:qmodel}
The acceleration vector of each quadrotor UAV is defined as $[u_x, u_y, u_z]^{\top}$, where $u_x$, $u_y$, $u_z$ denote the acceleration in the North-East-Down inertial frame. 
Each UAV is controlled by four control inputs $\mathbf{U}=[U_1,U_2, U_3, U_4]$ computed by the autopilot, e.g., PX4 in our work, where $U_1$ is the thrust force along the vertical direction and $U_2$, $U_3$, $U_4$ are rolling, pitching, and yawing moments respectively.
Aligned with MARL-related mainstream UAVs studies for MC-PEG\cite{hu2024transfer}, \cite{Zhang2023Games}, we assume the UAV swarm flies at a fixed altitude, by constraining ${u_z} \equiv 0$. After obtaining the desired acceleration $\textbf{u}=[{u_x}, {u_y}]$ in the horizontal direction by the proposed MARL method, it is sent to the autopilot, e.g., PX4, to calculate the thrust force $\mathbf{U}$ based on the Proportion Integration Differentiation (PID) algorithm\cite{meier2015px4}, with which the physical simulator then iterates UAV's pose based on 
the Newton-Euler formalism \cite{bresciani2008modelling}.


\subsubsection{CEFC Task Model}\label{sec:task_model}

We primarily consider a CEFC task as illustrated in 
Fig. \ref{fig:MPEOverview}, wherein a set $\mathcal{N}$ of UAVs, with $\vert \mathcal{N}\vert =N$, are required to carry out their missions at target areas (i.e., formation coverage) in a communication-limited decentralized manner, as shown in 
Fig. \ref{fig:MPEOverview}(a) and \ref{fig:MPEOverview}(b). Notably, we denote $\mathcal{T}$ as the set of target areas, and each target area $\mathfrak{T}\in \mathcal{T}$ can be represented as $\mathfrak{T}=(\textbf{p}_\mathfrak{T}, \kappa_\mathfrak{T}^{(t)}) $, where $\textbf{p}_\mathfrak{T}$ denotes the position and $\kappa_\mathfrak{T}^{(t)}$ denotes the urgency of the target area, which decreases as the agents arrive.
Besides, the adversary $\mathfrak{A}$ should prevent the agents from reaching the target areas, and in this work, due to the algorithmic maturity, domain suitability for continuous control, and implementation practicality with reduced hyperparameter sensitivity \cite{Yan2023Target}, a default PPO policy is primarily used for the adversary to trace the nearest group.

In case of the possible adversary (Fig. \ref{fig:MPEOverview}(c)) and obstacles (Fig. \ref{fig:MPEOverview}(d) and \ref{fig:MPEOverview}(e)), the set of UAVs switch among a set of pre-defined formation patterns $\{\Delta_c | \forall c \in \mathcal{C}\}$, where $c$ represents the agent quantity in formation $\Delta_c$ and $\mathcal{C}$ denotes the set of possible patterns.
Hence, at each time step $t$, each agent $i$ needs to recognize its $c-1$ teammates cooperatively and spontaneously determine one formation pattern $c$, resulting in $\chi^{(t)}$ groups with each group $\mathcal{N}_k, k\in \{1,\cdots, \chi^{(t)}\}$ of agents satisfying $\vert \mathcal{N}_k\vert = n_k$, $n_0$ isolated agents (i.e., no sufficient number of neighbors within the observation range to form any pattern in $\mathcal{C}$) and ${n}_{0}+{n}_{1}+\cdots+{n}_{\chi^{(t)}}=N$. 
In response to changes in the number of neighbor agents (i.e., from $3$ in Fig. \ref{fig:MPEOverview}(e) to $5$ in Fig. \ref{fig:MPEOverview}(f)), the UAV shall switch the formation pattern automatically.

To accomplish the CEFC task, we formulate the problem as a Decentralized Partially Observable Markov Decision Process (Dec-POMDP), which is defined as $\langle\mathcal{N}, \mathcal{S}, \mathcal{U}, P, \mathcal{Z}, \textbf{R}, \Omega, \gamma \rangle$.  
In the CEFC task, $\mathcal{S}$ denotes the global state space while $\mathcal{U}$ is the homogeneous action space for a single agent. 
At each time step $t$, owing to the scant ability of perception against the colossal environment, each agent $i\in \mathcal{N}$ obtains a local state $\textbf{z}_{i}^{(t)} \in \mathcal{Z}$ via the local state function $\Omega\left(\textbf{z}_{i}^{(t)} \mid \textbf{s}^{(t)}, i\right): \mathcal{S} \times \mathcal{N} \times \mathcal{Z} \rightarrow[0,1]$ instead of the state $\textbf{s}^{(t)} \in \mathcal{S}$ at each time-step, and adopts an action $\textbf{u}^{(t)}_{i} \in \mathcal{U}$ according to the individual policy $\bm{\pi}_{i}\left(\cdot \mid \textbf{z}_{i}^{(t)} \right): \mathcal{Z} \times \mathcal{U} \rightarrow[0,1]$.
The joint action $\textbf{u}^{(t)} =[\textbf{u}_{1}^{(t)} , \textbf{u}_{2}^{(t)} , \cdots, \textbf{u}_{N}^{(t)} ]$ taken at the current state $\textbf{s}^{(t)} $ makes the environment transit into the next state $\textbf{s}^{(t+1)}$ according to the function $P\left(\textbf{s}^{(t+1)} \mid \textbf{s}^{(t)}, \textbf{u}^{(t)}\right): \mathcal{S} \times \mathcal{U} \times \mathcal{S} \rightarrow[0,1]$. All agents share a global reward function $\textbf{R}(\textbf{s}^{(t)}, \textbf{u}^{(t)}): \mathcal{S} \times \mathcal{U} \rightarrow \mathbb{R}$ with a discount factor $\gamma$ and agents need to maximize the discounted accumulated reward $\mathbb{E} [{\textstyle \sum_{t}}\gamma ^t \textbf{R}^{(t)}]$. 
Consistent with the Dec-POMDP framework, we specify the elements as follows.
\begin{itemize}
\item \textit{Local State}:
The local state $\textbf{z}_{i}^{(t)}$ should encompass the information of neighbors, target areas, and the adversary.
For brevity, we denote the relative position and velocity of agent $j$ with respect to $i$ as $\textbf{p}_{i\to j}$ and $\textbf{v}_{i\to j}$.
The observation of target areas and the adversary can be denoted as $\textbf{o}_{\mathrm{tar} ,i}^{(t)}=\{\textbf{p}_{i\to \mathfrak{T}}^{(t)}, \kappa^{(t)}_\mathfrak{T}|\mathfrak{T} \in \mathcal{T} \}$ and $\textbf{o}_{\mathrm{adv},i}^{(t)}=[\textbf{p}_{i\to \mathfrak{A} }^{(t)},\textbf{v}_{i\to \mathfrak{A} }^{(t)}]$, respectively. 
Meanwhile, agent $i$ obtains some direct observation $\textbf{o}_{\mathrm{nei},i}^{(t)}=\{\textbf{p}_{i\to j}^{(t)},\textbf{v}_{i\to j}^{(t)}| \forall j\in \bm{\xi} _i^{(t)}\}$ of their neighbors through communications and receives exchanged messages $\textbf{M}_{\mathrm{nei},i}^{(t)}=\{ \textbf{m}_j^{(t)}| \forall  j\in \bm{\xi} _i^{(t)}\}$\footnote{The detailed procedure to acquire these messages shall be discussed in Sec. \ref{sec:high}.}, where $\textbf{m}_j^{(t)}$ is a learnable vector to be communicated and 
$\bm{\xi} _i^{(t)}$ denotes the set of agents satisfying that the Euclidean distance $\|\textbf{p}_{i\to j}^{(t)}\|$ is less than a maximum observation distance $\delta_{\obs}$ (i.e., $\|\textbf{p}_{i\to j}^{(t)}\| < \delta_{\obs}$).
For the sake of simplicity, the local observation of agent $i$ is defined as $\textbf{o}_i^{(t)} = [\textbf{o}_{\mathrm{tar},i}^{(t)},\textbf{o}_{\mathrm{adv},i}^{(t)},\textbf{o}_{\mathrm{nei},i}^{(t)}]$.
Besides, the detection results $\textbf{d}_{i}^{(t)}=[d_{i1}^{(t)},...,d_{iM}^{(t)}]$ of $M$ LiDARs\cite{yan2023collision} with angle resolution $2\pi/M$ are also used as part of the local state to help the agent avoid obstacles. 
Thus, the local state of agent $i$ is summarized as $\textbf{z}_i^{(t)}=[\textbf{o}_i^{(t)}, \textbf{M}_{\mathrm{nei},i}^{(t)},\textbf{d}_{i}^{(t)}]$. 

\item \textit{Action}:
As mentioned in Sec. \ref{sec:qmodel}, based on the local state $\textbf{z}_i^{(t)}$, each agent sets its acceleration $\textbf{u}_i^{(t)}=[{u}_{x_i}^{(t)},{u}_{y_i}^{(t)}] \in \mathcal{U}$ following policy 
$\bm{\pi}_{i}\left(\cdot \mid \textbf{z}_{i}^{(t)}\right)$ individually to complete CEFC task.

\item \textit{Reward Function}: For the CEFC task, the reward function is designed to summarize multiple ingredients with weights $\omega_{f}$, $ \omega_{n}$, $ \omega_{t}$, $\omega_{e}$, $\omega_{c}$ as
\begin{equation}
\label{eq:totalreward}
    \textbf{R}^{(t)} = \omega_f R_\mathrm{f}^{(t)} + \omega_n R_\mathrm{n}^{(t)} + \omega_t R_\mathrm{t}^{(t)} + \omega_e R_\mathrm{e}^{(t)} + \omega_c R_\mathrm{c}^{(t)},
\end{equation}
where the Hausdorff Distance \cite{pan2022flexible} (HD)-based \textit{formation reward} $R_\mathrm{f}^{(t)}$ measures the topological distance between the current and the expected formation for each group, while the \textit{navigation reward} $R_\mathrm{n}^{(t)}$, weighted by the urgency of target areas, incentivizes agents to efficiently reach target areas. The \textit{task accomplishment reward} $R_\mathrm{t}^{(t)}$ is used to quantify the progress of the formation coverage.
The \textit{evasion reward} $R_\mathrm{e}^{(t)}$ and \textit{collision avoidance reward} $R_\mathrm{c}^{(t)}$, which serve as a critical safety mechanism, penalize agents in close proximity to the adversary and obstacles.
We leave their specific expressions in Appendix \ref{sec:specificreward}. 

\end{itemize}

\begin{figure*}[tbp]
\vspace{-2.5em}
    \centering
\includegraphics[width = 0.73\textwidth]{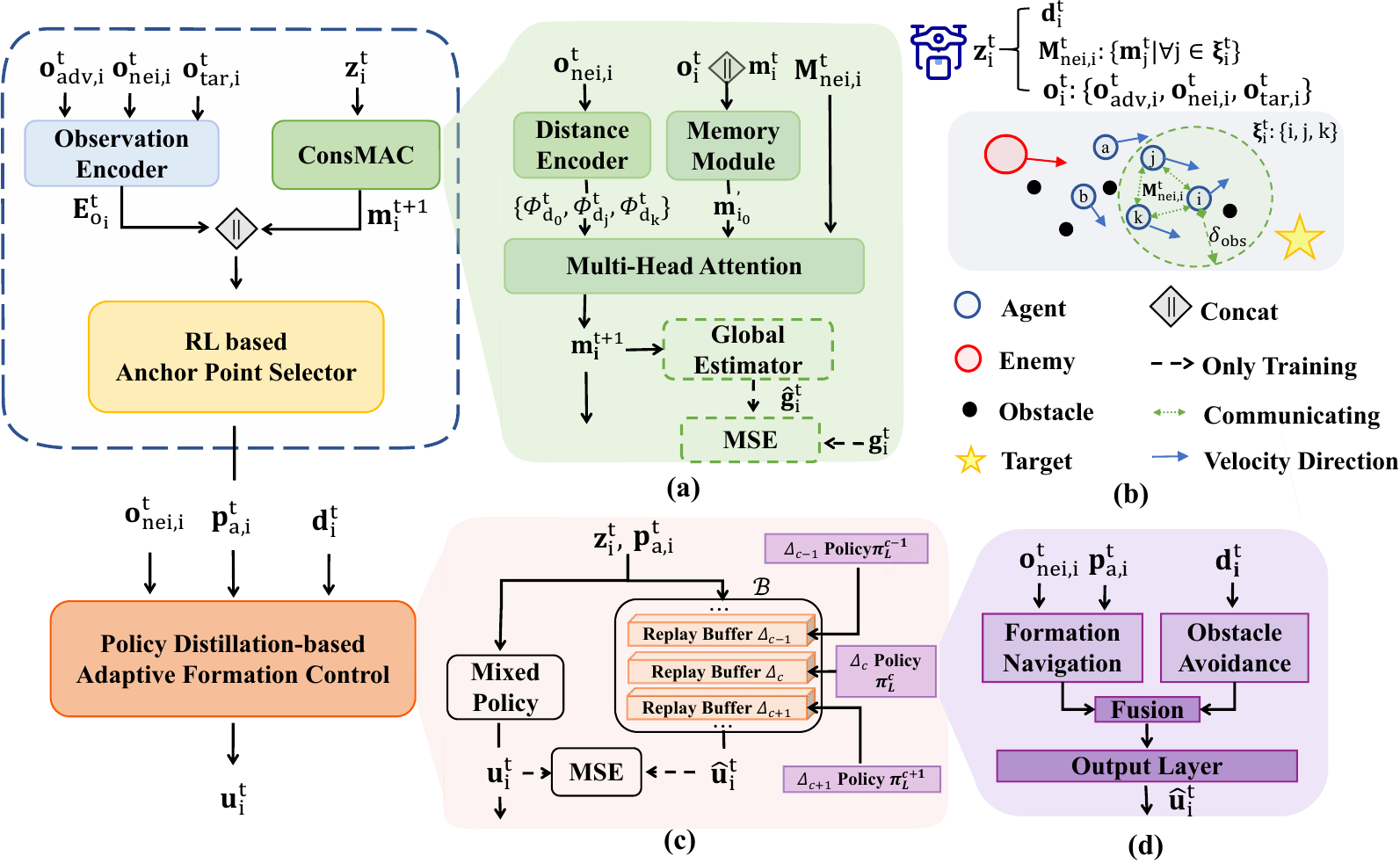}
\vspace{-1.1em}
\caption{The overview of CI-HRL.
\textbf{High-level Policy, in Section \ref{sec:high}}, generates anchor point $\textbf{p}_{a,i}^{(t)}$:
(a) The ConsMAC Module: Process local state $\textbf{z}_i^{(t)}$ to infer global state ${\textbf{g}}_i^{(t)}$ and generate the consensus message $\textbf{m}_i^{(t+1)}$,
(b) Communication process for $\textbf{M}_{\mathrm{nei},i}^{(t)}$ under limited range;
\textbf{Low-level Policy, in Section \ref{sec:low}}, outputs specific actions $\textbf{u}_i^{(t)}$ for task execution: (c) Policy distillation process for adaptive formation control,  (d) AT-M for single-formation policy.
}
\label{fig:overview}
\vspace{-1.7em}
\end{figure*}  
\subsection{Multi-Agent Proximal Policy Optimization}\label{sec:mappo}
To guide all agents to maximize the discounted accumulated reward $\mathbb{E} [{\textstyle \sum_{t}}\gamma ^t \textbf{R}^{(t)}]$,  MAPPO\cite{yu2022surprising}, which combines the single-agent PPO\cite{schulman2017proximal} and CTDE to learn the policy  $\bm{\pi}_{\theta_i}(\cdot |\textbf{z}_i^{(t)})$ ($\forall i$) and value function $V_\phi(\textbf{s}^{(t)}):\mathcal{S} \rightarrow \mathbb{R} $ parameterized by $\theta_i$ and $\phi$, is used. Consistent with PPO, MAPPO maintains the old-version $\theta_{i,\old}$ and $\phi _{\old}$, and uses $\theta_{i,\old}$ to interact with the environment and accumulate samples. Furthermore, $\theta_i $ and $\phi $ are periodically updated to maximize
\begin{align}    
    J_{\pi_i}^{(t)}(\theta_i)  = & \min \left( \beta_i^{(t)} \hat{A}^{(t)},\text{clip}\left(\beta_i^{(t)}, 1-\varepsilon, 1+\varepsilon\right) \hat{A}^{(t)}\right),\nonumber\\
    J_V^{(t)}(\phi) = &-\left (V_\phi (\textbf{s}^{(t)})- (\hat{A}^{(t)}+ V_{\phi _{\old}}(\textbf{s}^{(t)}) \right )^2,\label{eq:ppo}
\end{align}
where $\beta_i^{(t)} = \nicefrac{\bm{\pi}_{\theta_i}\left(\textbf{u}_i^{(t)}|\textbf{z}_i^{(t)}\right)}{\bm{\pi}_{\theta_{i,\old}}\left(\textbf{u}_i ^{(t)}|\textbf{z}_i^{(t)}\right)}$, the clipping function $\text{clip}\left(\beta_i^{(t)}, 1-\varepsilon, 1+\varepsilon\right)$ constrains $\beta_i^{(t)} \in [1-\varepsilon, 1+\varepsilon]$, and $\hat{A}^{(t)} = \sum _{l=0}^{T-t-1} (\gamma \lambda)^l \delta^{(t+l)}$ is the generalized advantage estimation (GAE)\cite{schulman2015high} 
with $\delta^{(t)}  =  \textbf{R}^{(t)} + \gamma V_{\phi_{\old}}(\textbf{s}^{(t+1)})-V_{\phi_{\old}}(\textbf{s}^{(t)})$. Notably, $\varepsilon$ and $\lambda$ are the hyperparameters.
Thus, the final optimization objective of MAPPO can be given by
\begin{equation}
\label{eq:mappo}
    J_{\mathrm{MAPPO}}=\mathbb{E}_{i,t} \left[ J_{\pi_i}^{(t)}(\theta_i )+J_V^{(t)}(\phi)+\alpha \mathcal{H}(\pi_{\theta_i} (\cdot |\textbf{z}_i^{(t)})) \right],
\end{equation}
where $\alpha$ is coefficient, and $\mathcal{H}$ is the entropy function. 

\subsection{Problem Formulation}\label{sec:pf}
We expect the MARL-driven agents to minimize the urgency of the target areas while avoiding collisions with obstacles and the PPO-driven adversary. 
Meanwhile, the parameters of each agent are shared during training to improve the learning efficiency, which implies $\bm{\pi}_{\theta_i}=\bm{\pi}_{\theta}$ ($\forall i\in \mathcal{N}$).
Accordingly, we propose a system utility function $\mathfrak{J}$ as the objective of policy optimization, which can be expressed as
\begin{equation}
\label{mappo}
    \max_{\bm{\pi}_{\theta}}\mathfrak{J} = \max_{\bm{\pi}_{\theta}} \mathbb{E}_t \left[\textbf{R}^{(t)}\big|\bm{\pi}_{\theta}\big(\cdot |\textbf{z}^{(t)}\big)\right].
\end{equation}
Basically, we follow the CTDE architecture and leverage MAPPO for multi-agent control. 
Recalling that $\textbf{z}_i^{(t)}=[\textbf{o}_i^{(t)}, \textbf{M}_{\mathrm{nei},i}^{(t)},\textbf{d}_{i}^{(t)}]$ ($\forall i$),
the communicated information (i.e., $\textbf{m}^{(t)}_i$) shall impact the final performance. Meanwhile, classical solutions to CEFC rely on a centralized leader and cannot be directly applied to distributed agents. Therefore, to train a policy that can be executed in a completely distributed manner, we resort to a consensus inference module ConsMAC to compute $\textbf{m}^{(t)}_i$ in Sec. \ref{sec:high} and propose a distillation-based, agent number-adaptable distributed solution.

\section{Consensus Inference-based Hierarchical Reinforcement Learning}
\label{sec:ci-hrl}
\subsection{Overview of the CI-HRL framework}\label{sec:overview}
In this section, we present the overview of the proposed CI-HRL framework, which capably addresses the challenges of cooperative evasion from the adversary and the execution of formation coverage tasks under the constraints of limited communication. 
As shown in Fig. 2, CI-HRL implements a decentralized, real-time inference of the environment, and
enables the UAV swarm to dynamically split and reorganize to accomplish the CEFC task. Specifically, for each agent $i$, CI-HRL automatically determines the anchor point $\textbf{p}_{a,i}\in \mathcal{P}_a$ as the temporary target location over a specified duration, where $\mathcal{P}_a$ denotes a candidate set of discrete or continuous coordinates. Hence, to strike a balance between evasion and mission execution, a subset of UAVs may select anchor points proximal to target areas for formation coverage, while others choose anchor points at a distance from predators to evade. 

The whole training process consists of two stages, and the final policy can be represented as a combination of two-level policies, $\bm{\pi}_{\Theta} = [\bm{\pi}_L, \bm{\pi}_H]$, where $\Theta$ denotes the collective parameters of multiple DNNs. 
During the low-level stage, each agent $i$ should learn a policy $\bm{\pi}_L\left(\textbf{u}_i|\textbf{p}_{a,i},\textbf{z}_i\right)$, which is a distillation-based adaptive formation solution AT-M, to output continuous acceleration $\textbf{u}_i$ to control its journey to a specific anchor point $\textbf{p}_{a,i}$ while maintaining formation with its neighbors.
Then, at the high-level stage, a ConsMAC-based, decentralized high-level policy $\bm{\pi}_H(\textbf{p}_{a,i}|\textbf{z}_i)$ is trained to output the anchor point for guiding the low-level policy, thus jointly accomplishing the CEFC task.
It should be noticed that the low-level policy solely accounts for formation with neighbors and navigation toward the anchor point, implying that when some UAVs select congruent anchor points that significantly differ from the rest of the swarm, these UAVs will automatically secede from the main group to form a new sub-group. Consequently, the formation pattern, to which each UAV belongs, is indirectly determined by the high-level decision-making.
To simplify the following presentation, we denote Multi-Layer Perceptron (MLP)-based DNN as $\mathcal{F}(\cdot)$.

\subsection{Low-level Policy}\label{sec:low}
In this section, we design an effective low-level policy that allows agents to constitute some pre-defined formations and move towards the anchor point in a communication-limited, decentralized manner.
Notably, we start with a low-level policy for a specific formation while ignoring the target areas as well as the adversary in Sec. \ref{sec:single}. During the training, all agents belong to the same group (i.e., $\chi^{(t)}$ is set to $1$). Afterward, more general cases for flexible formation patterns will be investigated in Sec. \ref{sec:adaptive}.

\subsubsection{Low-Level Objective}
As mentioned earlier, the low-level policy considers how to avoid obstacles and travel to the anchor point with formation, yielding the acceleration control of the agent. Therefore, the input only includes the observation of neighbors $\textbf{o}_{\mathrm{nei},i}^{(t)}$, LiDAR detection results $\textbf{d}_{i}^{(t)}$, and the anchor point $\textbf{p}_a$, which during the training are the same and randomly given by the environment in each episode. In Sec. \ref{sec:high}, we will explain how to compute appropriate anchor points for practical execution. 
Meanwhile, since the low-level policy focuses more on the navigation and formation, the task and evasion reward (i.e., $R_\mathrm{t}$ and $R_\mathrm{e}$) in \eqref{eq:totalreward} are ignored in this sub-section. Therefore, the low-level optimization objective can be simplified as 
    $\max_{\bm{\pi}_L}\mathfrak{J}_L = \max_{\bm{\pi}_L} \mathbb{E}_t [\textbf{R}_\mathrm{L}^{(t)}\big|\bm{\pi}_L]$,
where $\textbf{R}_\mathrm{L}^{(t)} = \omega_{f} R_\mathrm{f}^{(t)}+ \omega_n R_\mathrm{n}^{(t)} + \omega_c R_\mathrm{c}^{(t)}$. Moreover, given that we focus on anchor points instead of target areas, the navigation reward given by \eqref{eq:nareward} in Appendix \ref{sec:nav_reward} can be regarded as $R_\mathrm{n}^{(t)} = -\|\bar{\textbf{p}}^{(t)}-\textbf{p}_a\|$, where $\bar{\textbf{p}}^{(t)}$ is the center of agents.

\subsubsection{AT-M for Single-Formation Policy}\label{sec:single}
Despite the remarkable performance in small-scale, fully connected multi-agent formation tasks, MAPPO 
\cite{yu2022surprising} in Section \ref{sec:mappo} faces significant difficulties in training formation and navigation policy directly in a partially observable environment with random obstacles. For example, our results show that by assigning a small weight $\omega_c$ to collision avoidance reward, MAPPO converges to a well-formed policy with frequent collisions. On the contrary, the policy ends up with poor formation competence for enlarging $\omega_c$. Motivated by these facts, we adopt alternative training-based MAPPO (AT-M) to fuse multiple coupled constraints and better balance the tradeoff therein.

Beforehand, consistent with the conventional MAPPO framework, the corresponding value function $V_{\phi_L}$ is implemented by using an MLP \cite{yu2022surprising}. 
As shown in Fig. 2(d), for a fixed formation $\{\Delta_c | \forall c \in \mathcal{C}\}$, AT-M aims to obtain a policy network $\bm{\pi}_L^c$ parameterized by $\Theta_L^c = [\theta_{Lf}^c, \theta_{La}^c, \theta_{Lo}^c]$, which consists of three MLP-based parts, namely the formation \& navigation module $\mathcal{F}_{\theta_{Lf}^c}$, the obstacle avoidance module $\mathcal{F}_{\theta_{La}^c}$ and the output layer $\mathcal{F}_{\theta_{Lo}^c}$.
In particular, for each time step $t$, $\mathcal{F}_{\theta_{Lo}^c}$ is used to sample an action $\textbf{u}_i^{(t)}\sim \operatorname{Normal}(\boldsymbol{\mu}_i^{(t)},\boldsymbol{\sigma}_i^{(t)})$ as the final output by computing the mean and variance of the Gaussian distribution as
\begin{equation}
\label{eq:lowlevel}
        \boldsymbol{\mu}_i^{(t)}, \boldsymbol{\sigma}_i^{(t)} = \mathcal{F}_{\theta_{Lo}^c}(\mathcal{F}_{\theta_{Lf}^c}(\textbf{p}_a, \textbf{o}_{\mathrm{nei},i}^{(t)})+\mathcal{F}_{\theta_{La}^c}(\textbf{d}_{i}^{(t)})).
\end{equation}
AT-M first independently trains DNNs parameterized by $\Theta_{L}^c$ in an environment without and with the involvement of obstacles (i.e., some $\omega_c\neq 0$), and obtains two sets of corresponding parameters $\Theta_{L,1}^c$ and $\Theta_{L,2}^c$ respectively. 
On this basis, AT-M fine-tunes a merged policy parameterized by $[\theta_{Lf,1}^c, \theta_{La,2}^c, \theta_{Lo,1}^c]$ in the original environment to achieve the policy $\bm{\pi}_L^c$.
Correspondingly, the swarm is endowed with the capacity to reach the anchor point in a fixed formation $\Delta_c$ while avoiding obstacles. 
Due to the independent pre-training of modules across disparate environments, we regard this approach as alternative training.

    

\begin{figure}[!tbp]
\centering
\includegraphics[width=0.42\textwidth]{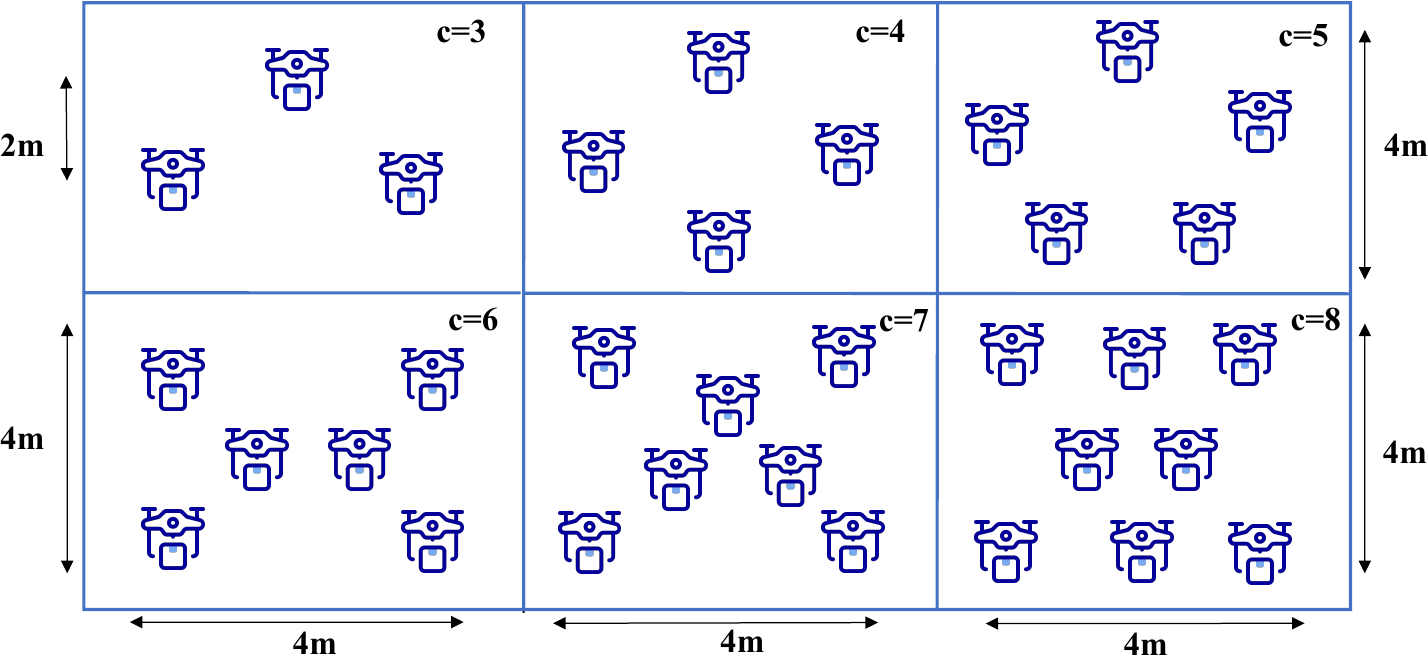}
\vspace{-1em}
\caption{An illustration of the pre-defined formations corresponding to different numbers of UAVs (i.e., $c\in \mathcal{C} = \{3, 4, 5, 6, 7, 8\}$).}
\label{fig:forma}
\vspace{-1em}
\end{figure}
\subsubsection{Policy Distillation-based Adaptive Formation Control}\label{sec:adaptive}
Following Sec. \ref{sec:single}, we can obtain a set of policies  $\{\bm{\pi}_L^c| \forall c \in \mathcal{C}\}$ respectively for formation $\Delta_{c}$ depicted in Fig. \ref{fig:forma}.
In this part, we regard these policies as teacher models (i.e., a teacher model $\bm{\pi}_L^c$ instructs the formation $\Delta_c$), and utilize policy distillation\cite{rusu2015policy} to obtain a mixed-formation policy, so as to reduce local memory occupation.
As shown in Fig. 2(c), we collect both inputs and outputs  of policies $\{\bm{\pi}_L^c|\forall c \in \mathcal{C} \}$ by constituting a replay memory $\mathcal{B}= \{\langle\textbf{p}_a^c, \textbf{z}^{{c}}, \textbf{u}^{{c}}\rangle_{\times \Lambda}|\forall c \in \mathcal{C} \}$, 
where $\textbf{u}^{c}$ is generated by a learned teacher model $\bm{\pi}_L^{c}$ to form $\Delta_c$ and $\Lambda$ is the capacity of replay buffer. 
Considering the dimension of observation $\textbf{z}^{c}$ is determined by the agent number $c$, we align the vectors of observations in different formations by a zero-padding operation.
In each training episode, memories (i.e., $\textbf{z}$) from different teacher models are fed into the student model $\bm{\pi}_L$ simultaneously to calculate the corresponding $\hat{\textbf{u}} = \bm{\pi}_L(\textbf{p}_{a},\textbf{z})$. 
Afterwards, we train $\bm{\pi}_L$ by minimizing the Mean-Squared-Error (MSE) loss
\begin{equation}
\label{eq:pdloss}
    \mathcal{L}_{\mathrm{PD}}\left(\Theta_L\right) = \sum\nolimits_{\textbf{u}\in \mathcal{B}}\left \| \textbf{u} -\hat{\textbf{u}} \right \| ^2_2 ,
\end{equation}
where $\Theta_L$ is the parameter of $\bm{\pi}_L$. 
After updating $\Theta_L$ through \eqref{eq:pdloss}, the mixed-formation policy set $\{\bm{\pi}_L^c | \forall c \in \mathcal{C}\}$ is merged to one student policy $\bm{\pi}_L$, which significantly saves the usage of agents' memory. 

To sum up, we describe the training procedure of the low-level policy in Algorithm \ref{alg:low}.

\begin{algorithm}[tbp]
\caption{The Training of Low-Level Policy}\label{alg:low}
\begin{algorithmic}[1]
    \STATE Initialize the set of quantities $\mathcal{C}$.\\
    \textbf{// AT-M for each formation pattern}
    \FOR{each $c \in \mathcal{C}$}
        \STATE \textbf{// Step 1}
        \STATE Initialize the environment corresponding to $c$;
        \STATE Train the policy and value function parameterized by $\Theta_{L,1}^c = [\theta_{Lf,1}^c, \theta_{La,1}^c, \theta_{Lo,1}^c]$ and $\phi_{L,1}^c$ with random initialization by MAPPO; 
        \STATE \textbf{// Step 2}
        \STATE Initialize the environment with random obstacles;
        \STATE Train the policy and value function parameterized by  $\Theta_{L,2}^c = [\theta_{Lf,2}^c, \theta_{La,2}^c, \theta_{Lo,2}^c]$ and $\phi_{L,2}^c$ with random initialization by MAPPO; 
        \STATE \textbf{// Step 3}
        \STATE Initialize the environment with random obstacles;
        \STATE Fine-tune the the policy and value function parameterized by $\Theta_{L}^c = [\theta_{Lf,1}^c, \theta_{La,2}^c, \theta_{Lo,1}^c]$ and $\phi_{L,2}^c$ by MAPPO; 
    \ENDFOR \\
    \textbf{// Policy Distillation}
    \STATE Initialize the replay memory $\mathcal{B}\leftarrow\varnothing$ and the student policy $\bm{\pi}_L$ with random parameters $\Theta_{L}$, the training batch size $N_{b}$; 
    \STATE Collect tuples $\langle \textbf{p}_{a}, \textbf{z}, \textbf{u}\rangle$ in each environment by $\{\bm{\pi}_L^c | \forall c \in \mathcal{C}\}$ respectively and store them in $\mathcal{B}$;
    \FOR{each policy distillation epoch}
        \STATE Sample a batch of $N_{b}$ tuples from $\mathcal{B}$;
        \STATE Calculate $\hat{\textbf{u}}$ based on $\textbf{p}_{a}$ and $\textbf{z}$;
        \STATE Update $\Theta_L$ according to \eqref{eq:pdloss} via gradient descent and Adam optimizer;
    \ENDFOR
    \ENSURE The trained $\Theta_L$;
\end{algorithmic}
\end{algorithm}
\vspace{-0.1 em}

\vspace{-0.2 em}
\subsection{High-level Policy}\label{sec:high}
Contingent on the low-level policy $\bm{\pi}_L$ for yielding the formation and obstacle avoidance action $\textbf{u}$, the MAPPO-based
high-level policy $\bm{\pi}_H$ can avoid the cumbersomeness of considering underlying tasks and put more emphasis on selecting an appropriate anchor point $\textbf{p}_a$, which fully takes account of evading the adversary for accomplishing the CEFC task. Similar to the low-level policy, the corresponding value function $V_{\phi_H}$ is implemented by MLP as well. 
However, different from MAPPO, as illustrated in 
Fig. 2(a), the high-level policy introduces a novel, supervised-learning-based consensus inference method ConsMAC, which can provide global consensus and complement the RL-based target selector to calculate certain decisions from a more global perspective.

\subsubsection{High-Level Objective}
Since the low-level policy saves the high-level policy from the formation and collision avoidance, 
the optimization objective of high-level policy can be represented as $\max_{\bm{\pi}_H}\mathfrak{J}_H = \max_{\bm{\pi}_H} \mathbb{E}_t [\textbf{R}_\mathrm{H}^{(t)}|\bm{\pi}_H, \bm{\pi}_L]$, 
where $\textbf{R}_\mathrm{H}^{(t)} = \omega_{n} R_\mathrm{n}^{(t)}+ \omega_t R_\mathrm{t}^{(t)} + \omega_e R_\mathrm{e}^{(t)}$. 
Recalling $\textbf{z}_i^{(t)}=[\textbf{o}_i^{(t)}, \textbf{M}_{\mathrm{nei},i}^{(t)},\textbf{d}_{i}^{(t)}]$, the maximization of this high-level objective lies in the effectiveness of exchange messages $\textbf{M}_{\mathrm{nei},i}^{(t)}=\{ \textbf{m}_j^{(t)}| \forall  j\in \bm{\xi} _i^{(t)}\}$.

\subsubsection{ConsMAC-based Anchor Point Selection} \label{sec:hl_cons}
Targeted at effectively aggregating observations $\textbf{o}_i$ and messages $\textbf{M}_{\mathrm{nei},i}$ received from neighbors, ConsMAC is carefully calibrated and encompasses the following parts. Firstly, to incorporate historical information, each agent $i$ adopts a GRU-alike memory module $\mathcal{F}_{\psi_M}$ parameterized by $\psi_M$ to process the concatenation of its own message $\textbf{m}_{i_0}^{(t)}$ and the local observation $\textbf{o}_i^{(t)}$, that is,
\begin{equation}
\label{eq:mem}
    \textbf{m}'_{i_0}= \mathcal{F}_{\psi_M}([\textbf{m}_{i_0}^{(t)}||\textbf{o}_i^{(t)}]),
\end{equation}
where $ \|$ represents the concatenation operation and the dimensions of both $\textbf{m}_{i_0}^{(t)}$ and $\textbf{m}'_{i_0}$ are the same. Meanwhile, for simplicity of representation, for each agent $i\in \mathcal{N}$, $i_j$ denotes the $j$-th nearest neighbor while $i_0$ indicates itself. Next, in resemblance to the positional encoding in Transformer\cite{vaswani2017attention}
, we use a learnable distance encoder to distinguish messages from agents with different distances and correspondingly assign different attention weights. Mathematically,
\begin{equation}
\label{eq:DE}
    \begin{aligned}
        \mathbf{E}_{m_{i}}^{(t)} = \left[ \textbf{m}'_{i_0}||\Phi_{d_0}^{(t)}\right],
        \mathbf{E}_{m_{-i}}^{(t)} = \left[\textbf{m}_{i_1}^{(t)}||\Phi_{d_1}^{(t)}, 
             \dots, \textbf{m}_{i_k}^{(t)}||\Phi_{d_k}^{(t)}\right]^{\top},\\
    \end{aligned}
\end{equation}
where $\Phi_{d_j}^{(t)}= \sqrt{\frac{1}{D}} \ [ \cos (w _1 \|\textbf{p}_{i\to {i_j}}^{(t)}\|),\dots ,\cos (w _{D} \|\textbf{p}_{i\to {i_j}}^{(t)}\|) \ ] ^{\top}$, 
$\psi_D = [ w _1,\dots , w_D]$ are the trainable parameters, $D$ is the dimension of the latent space and $k=|\bm{\xi} ^{(t)}_i|$ denoting the number of neighbors. Then, each agent can aggregate a latent vector $\textbf{m}_i^{(t)}$ as the message for time-step $t+1$ by
\begin{equation}
\label{eq:AL}
    \begin{aligned}
        \textbf{m}_i^{(t+1)} = \operatorname{MHA}_{\psi_{A}}
        (\mathbf{E}_{m_i}^{(t)},\mathbf{E}_{m_{-i}}^{(t)},\mathbf{E}_{m_{-i}}^{(t)}),
    \end{aligned}
\end{equation}
where $\operatorname{MHA}$ is a multi-head attention layer\cite{vaswani2017attention} parameterized by $\psi_A$.
Furthermore, we infer the global information by a global estimator $\mathcal{F}_{\psi_E}$ parameterized by $\psi_E$, and the estimated state embedding can be written as
\begin{equation}
\label{eq:GE}
    \begin{aligned}
        &\hat{\textbf{g}} _i^{(t)}  = \mathcal{F}_{\psi_E}({\textbf{m}_i^{(t+1)}}).
    \end{aligned}
\end{equation}
We leverage global state $\textbf{g}^{(t)}_i \in \textbf{s}^{(t)}$ as the label for supervised learning, and adopt MSE as the loss function. Therefore, the loss function of ConsMAC can be formulated as

\begin{equation}
\label{eq:celoss}
    \begin{aligned}
        \mathcal{L}_{\mathrm{ConsMAC}}(\Psi)=  \mathbb{E}_{i,t}\left [|| \hat{\textbf{g}}_i^{(t)} - \textbf{g}_i^{(t)} ||^2\right],
    \end{aligned}
\end{equation}
where $\Psi=[\psi_D,\psi_M,\psi_A,\psi_E]$.  
Notably, the specific choice of global state $\textbf{g}^{(t)}_i$ could be rather flexible such as the anchor points of all agents $\textbf{g}^{(t)}_i = \{\textbf{p}_{a,j}^{(t)}|\forall j \in \mathcal{N}\}$ or the observations of all agents $\textbf{g}^{(t)}_i = \{\textbf{p}_{i\to j}^{(t)}, \textbf{v}_{i\to j}^{(t)}|\forall j \in \mathcal{N}\}$, denoted as ConsMAC-A and ConsMAC-O, respectively. We evaluate both methods in Sec. \ref{sec:exp} and validate the scalability of ConsMAC.
In this way, minimize \eqref{eq:celoss} ensures $\mathcal{F}_{\psi_E}({\textbf{m}_i^{(t+1)}}) \rightarrow \textbf{g}_i^{(t)}$, and the intermediate output of ConsMAC in the local perspective can implicitly embed the global information of all agents (i.e., $\textbf{m} = \mathcal{F}^{-1}(\textbf{g}, \textbf{o})$)
, rather than merely aligning local observations in methods like NVIF\cite{chai2023nvif} (i.e., $\textbf{m} \approx \mathcal{F}^{-1}(\textbf{o})$). Due to the uniqueness of the global state, such a procedure can be interpreted as the establishment of consensus among neighbors. 


Accordingly, the RL-based policy, which encompasses an observation encoder and an RL-based selector, yields a suitable anchor point corresponding to the observations and established consensus (i.e., $\textbf{o}_i^{(t)}$ and $\textbf{m}_i^{(t+1)}$). In particular, the embedding vector $\mathbf{E}_{o_i}^{(t)}$ can be obtained by the MLP-based observation encoder $\mathcal{F}_{\theta_{HO}}$ and the RL-based selector $\mathcal{F}_{\theta_{HS}}$ is used to calculate and sample the anchor point as the final output
\begin{equation}
    \label{eq:PE}
    \textbf{p}_{a,i}^{(t)} \sim \mathcal{F}_{\theta_{HS}}(\mathbf{E}_{o_i}^{(t)}, \textbf{m}_i^{(t+1)}).
\end{equation}
As mentioned in Sec. \ref{sec:overview}, we can treat $[\Theta_H,\Psi]$ as the parameters of the high-level policy $\bm{\pi}_H$, where $\Theta_H=[\theta_{HO},\theta_{HS}]$.

\vspace{-0.em}
\subsubsection{Training Techniques}
\label{sec:joint-training}
In a nutshell, the loss function of the high-level policy can be summarized as
\begin{equation}
\label{eq:ConsMACloss}
    \mathcal{L}_{\mathrm{high}}(\Theta_H, \phi_H, \Psi)=-J_{\mathrm{high}}(\Theta_H) - J_V(\phi_H) + \mathcal{L}_{\text{ConsMAC}}(\Psi).
\end{equation}
The optimization of $\Psi$ for ConsMAC is a standard supervised learning problem that can be solved by a gradient descent algorithm.
Then the RL-based module $\Theta_H$ utilizes the message $\textbf{m}^{(t+1)}$, yielded by ConsMAC, as part of the local information for decision-making as in \eqref{eq:PE}, and employs the gradient obtained by RL to optimize the policy.
It should be noticed that even though the message $\textbf{m}^{(t+1)}$ is used as the input of the policy module, the RL loss doesn't backpropagate to ConsMAC.
In other words, ConsMAC is designed as an independent information processing module to provide the global consensus, and the anchor point selector needs to learn how to use it through the RL method. During the training process, both ConsMAC and RL-based modules are alternately updated.
On the other hand, in our hierarchical architecture, the training of the high-level policy is based on the lower-level network, which means that specific decisions of the agents in the environment are given by the low-level policy. Therefore, it is necessary to derive how to compute the specific RL gradient of $\bm{\pi}_{\Theta_H}$ in this nested scenario. 
In this regard, the gradient of $J_{\mathrm{high}}$ will be given by the following theorem. For convenience, we omit the superscript $(t)$ in this part and simplify $\textbf{m}$ and $\textbf{z}$ as $\textbf{s}$.
Therefore, the high- and low-level policies are denoted as $\bm{\pi}_{H}(\textbf{p}_a | \textbf{s})$ and $\bm{\pi}_L(\textbf{u} | \textbf{s}, \textbf{p}_a)$, respectively.
To align with the PPO-based CI-HRL framework, we assume the joint policy for the agent to interact with the environment (i.e., $\bm{\pi}_{\Theta} = [\bm{\pi}_L, \bm{\pi}_{H}]$) is a stochastic policy, which implies that the two policies are not deterministic simultaneously.

\begin{theorem} 
\label{thm:highpolicygradient}
Given the high-level RL-based policy $\bm{\pi}_{H}$, and the low-level policy $\bm{\pi}_{L}$, the gradient of the objective function $J_{\mathrm{high}}(\Theta_H)$ with respect to the variable $\Theta_H$ is
\begin{equation}
\begin{aligned}
    \nabla_{\Theta_H}& J_{\mathrm{high}}(\Theta_H)= \mathbb{E}_{\textbf{s}, \textbf{u}\sim \bm{\pi}_L, \ \textbf{p}_a \sim \bm{\pi}_{H}}[ ( \alpha_H \nabla_{\Theta_H} \ln {\bm{\pi}}_{H}(\textbf{p}_a | \textbf{s})
     \\
    &+ \alpha_L \nabla_{\textbf{p}_a} \ln {\bm{\pi}}_L(\textbf{u} | \textbf{s}, \textbf{p}_a) \nabla_{\Theta_H} {\bm{\pi}}_{H}(\textbf{p}_a | \textbf{s})) Q(\textbf{s},\textbf{u}) ],
\end{aligned}
\end{equation}
where $Q$ is the state-action value function of ${\bm{\pi}}_L$, $\textbf{u}$ is the joint action of all agents. $\alpha_H$ is set to $1$ if the high-level policy is stochastic, and $0$ otherwise. $\alpha_L$ is associated with the low-level policy following the same binary convention.
\end{theorem}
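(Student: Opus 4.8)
The plan is to collapse the nested hierarchical rollout into a single \emph{composite} stochastic policy over the environment-facing action $\textbf{u}$, and then apply the ordinary policy gradient theorem. Since $\Theta_L$ is frozen while the high level is trained, I would first define the marginalized policy $\tilde{\bm{\pi}}_{\Theta_H}(\textbf{u}\mid\textbf{s})=\sum_{\textbf{p}_a\in\mathcal{P}_a}\bm{\pi}_{H}(\textbf{p}_a\mid\textbf{s})\,\bm{\pi}_L(\textbf{u}\mid\textbf{s},\textbf{p}_a)$ (an integral if $\mathcal{P}_a$ is continuous) and observe that $J_{\mathrm{high}}(\Theta_H)$ is exactly the standard discounted-return objective of $\tilde{\bm{\pi}}_{\Theta_H}$ on the reduced MDP over $(\textbf{s},\textbf{u})$. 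A key preliminary observation is that the anchor point $\textbf{p}_a$ enters neither the reward nor the transition except through $\textbf{u}$, so in the augmented MDP whose action is the pair $(\textbf{p}_a,\textbf{u})$ the state-action value is independent of $\textbf{p}_a$ and hence equals the $Q(\textbf{s},\textbf{u})$ appearing in the statement; this is what lets the anchor point be integrated out cleanly. The multi-agent, parameter-shared setting is dispatched up front by the usual remark that the gradient with respect to shared parameters is the expectation over agents of the per-agent contributions, so it suffices to argue for one composite policy with $\textbf{u}$ read as the joint action.

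Applying the policy gradient theorem to $\tilde{\bm{\pi}}_{\Theta_H}$ — legitimate because the non-simultaneous-determinism hypothesis guarantees $\tilde{\bm{\pi}}_{\Theta_H}(\cdot\mid\textbf{s})$ is a genuine stochastic policy — gives $\nabla_{\Theta_H}J_{\mathrm{high}}=\mathbb{E}_{\textbf{s}\sim d^{\tilde{\bm{\pi}}},\,\textbf{u}\sim\tilde{\bm{\pi}}}[\nabla_{\Theta_H}\ln\tilde{\bm{\pi}}_{\Theta_H}(\textbf{u}\mid\textbf{s})\,Q(\textbf{s},\textbf{u})]$, where a baseline-subtraction remark reconciles $Q$ with the GAE advantage actually used in the MAPPO update. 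The heart of the proof is to expand $\nabla_{\Theta_H}\ln\tilde{\bm{\pi}}_{\Theta_H}(\textbf{u}\mid\textbf{s})$ by differentiating under the sum/integral (dominated convergence applies for smooth network policies), noting that the only $\Theta_H$-dependence sits either inside $\bm{\pi}_H$ as a density or inside the argument of $\bm{\pi}_L$ when the high-level head is deterministic. If $\bm{\pi}_H$ is stochastic I would use $\nabla_{\Theta_H}\bm{\pi}_H=\bm{\pi}_H\nabla_{\Theta_H}\ln\bm{\pi}_H$, divide by $\tilde{\bm{\pi}}_{\Theta_H}(\textbf{u}\mid\textbf{s})$ to recognize a conditional expectation over $\textbf{p}_a\sim p(\textbf{p}_a\mid\textbf{s},\textbf{u})$, and fold it back into the outer expectation so the sampling becomes $\textbf{p}_a\sim\bm{\pi}_H$ followed by $\textbf{u}\sim\bm{\pi}_L$; this recovers the $\alpha_H\nabla_{\Theta_H}\ln\bm{\pi}_H(\textbf{p}_a\mid\textbf{s})$ term. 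If instead $\bm{\pi}_H$ is deterministic, writing $\textbf{p}_a=\textbf{p}_a(\textbf{s};\Theta_H)$ gives $\tilde{\bm{\pi}}_{\Theta_H}(\textbf{u}\mid\textbf{s})=\bm{\pi}_L(\textbf{u}\mid\textbf{s},\textbf{p}_a(\textbf{s};\Theta_H))$, and the chain rule produces $\nabla_{\textbf{p}_a}\ln\bm{\pi}_L(\textbf{u}\mid\textbf{s},\textbf{p}_a)\,\nabla_{\Theta_H}\textbf{p}_a(\textbf{s};\Theta_H)$, i.e. the $\alpha_L$ term once $\nabla_{\Theta_H}\bm{\pi}_H(\textbf{p}_a\mid\textbf{s})$ is read as the Jacobian of the anchor-point map. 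Because the joint policy is stochastic, exactly one regime governs the high-level head, so attaching the binary indicators $\alpha_H,\alpha_L$ merges the two computations into the single claimed identity.

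I expect the main obstacle to be the deterministic-head branch: there the likelihood-ratio trick is unavailable, so the derivative must be taken pathwise through $\bm{\pi}_L$, and one has to be explicit that $\nabla_{\Theta_H}\bm{\pi}_H(\textbf{p}_a\mid\textbf{s})$ then denotes the Jacobian of the map $\textbf{s}\mapsto\textbf{p}_a$ rather than the gradient of a probability — keeping the vector/matrix shapes and the multiplication order consistent is the delicate point. Secondary care is needed to justify that marginalizing $\textbf{p}_a$ leaves $Q$ unchanged and that differentiation commutes with the sum/integral and with the stationary-distribution expectation, both of which are routine under the network-smoothness assumptions.
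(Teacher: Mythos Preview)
Your overall strategy --- collapsing the hierarchy into a single composite policy $\tilde{\bm{\pi}}_{\Theta_H}(\textbf{u}\mid\textbf{s})=\sum_{\textbf{p}_a}\bm{\pi}_H(\textbf{p}_a\mid\textbf{s})\,\bm{\pi}_L(\textbf{u}\mid\textbf{s},\textbf{p}_a)$ and then invoking the ordinary policy-gradient theorem --- is exactly what the paper does. It names the composite $F(\textbf{u}\mid\textbf{s})$, expands $\nabla_{\Theta_H}V$ through the discounted visitation distribution $\rho$, and lands on $\sum_{\textbf{s}',\textbf{u}}\rho(\textbf{s}')\,\nabla_{\Theta_H}F(\textbf{u}\mid\textbf{s}')\,Q(\textbf{s}',\textbf{u})$ before unpacking $F$. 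Your observation that $Q$ does not depend on $\textbf{p}_a$ once $\textbf{u}$ is fixed is the same justification the paper relies on implicitly.

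The gap is in your case analysis. The paper explicitly distinguishes \emph{three} cases (both stochastic; only $\bm{\pi}_H$ stochastic; only $\bm{\pi}_L$ stochastic) and carries out the \emph{both-stochastic} case in full: it applies the log-derivative trick to the product $\bm{\pi}_L\bm{\pi}_H$ and then the chain rule through $\textbf{p}_a$ to obtain both summands simultaneously, declaring the remaining two cases analogous. Your plan instead splits only on whether $\bm{\pi}_H$ is stochastic or deterministic and asserts that ``exactly one regime governs,'' producing one term per branch. That misreads $\alpha_L$: in the theorem $\alpha_L=1$ whenever $\bm{\pi}_L$ is stochastic, independent of $\bm{\pi}_H$, so when both levels are stochastic the statement has $\alpha_H=\alpha_L=1$ and both terms must appear. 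Your stochastic-$\bm{\pi}_H$ branch, as written, stops at the $\alpha_H$ term and therefore does not reproduce the claimed identity in precisely the case the paper treats as primary. To align with the paper you need to handle the both-stochastic case on its own and recover the $\nabla_{\textbf{p}_a}\ln\bm{\pi}_L\cdot\nabla_{\Theta_H}\bm{\pi}_H$ contribution there as well, rather than reserving that term exclusively for the deterministic-$\bm{\pi}_H$ branch.
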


\begin{proof} 
See Appendix \ref{sec:proof_thm:highpolicygradient}.
\end{proof}

To boost the performance of the trained high-level policy, we treat the low-level policy as a deterministic policy and use a navigation-only low-policy, which neglects the constraint of formation, to pre-train the stochastic high-level policy. Afterward, we fine-tune the high-level policy in the complete environment with the formation constraint. 
To sum up, the training procedure is summarized in Algorithm \ref{alg:high}.

\section{Experimental Results and Discussions}\label{sec:exp}
In this section, we evaluate our method in both the multi-agent particle environment (MPE) \cite{lowe2017multi} and 
an SITL simulation environment based on the ROS (Robot Operating System) system, Gazebo-Classic physics simulator \cite{koenig2004design}, and PX4 autopilot \cite{meier2015px4} for quadrotor UAVs. 
Notably, different formation patterns correspond to different formation numbers as shown in 
Fig. \ref{fig:forma}. 
\vspace{-1.em}

\begin{algorithm}[!tbp]
\caption{The Training of CI-HRL}\label{alg:high}
\begin{algorithmic}[1]

    \STATE Initialize ConsMAC, the RL-based module and value function with random parameters $\Psi$, $\Theta_H$, $\phi_H$, respectively;
    \STATE Initialize the training batch size of ConsMAC $N_{b}$, and the replay memory $\mathcal{B}_\mathrm{C}\leftarrow\varnothing$;
    \STATE Initialize the adversary $\mathfrak{A}$ and the set of target areas $\mathcal{T}$;
    \\
    \textbf{// Pre-training of high-level policy}
    \FOR{each train episode}
        \STATE Randomly initialize the environment state $\textbf{s}^{(0)}$;\\
        \textbf{// Training of RL-based module}
        \FOR{$t= \{ 1,\cdots, T\}$}
            \STATE Each agent $i$ obtains $\textbf{o}_i^{(t)}$ from $\textbf{s}^{(t)}$ and receives $\textbf{M}_\mathrm{nei,i}^{(t)}$ from neighbors;
            \STATE Calculate $\textbf{m}_i^{(t+1)}$ by \eqref{eq:AL} and $\textbf{p}_{a,i}^{(t)}$ by \eqref{eq:PE};
            \STATE Store $\langle \textbf{o}_i^{(t)}, \textbf{M}_\mathrm{nei,i}^{(t)}, \textbf{p}_{a,i}^{(t)}, \textbf{s}_i^{(t)}\rangle $ in $\mathcal{B}_\mathrm{C}$.
            \STATE Calculate the state value $V_{\phi_H}(\textbf{s}^{(t)})$;
            \STATE Calculate actions $\textbf{u}_i^{(t)}$ by a navigation-only policy;
            \STATE Execute actions, obtain the reward $\textbf{R}_\mathrm{H}^{(t)}$ and update state $\textbf{s}^{(t)} \rightarrow \textbf{s}^{(t+1)}$;
        \ENDFOR
        \STATE Update $\Theta_H$, $\phi_H$ according to \eqref{eq:ConsMACloss} while incorporating the training techniques of MAPPO;
        \\
        \textbf{// Training of ConsMAC}
        \FOR{each update epoch of ConsMAC}
            \STATE Sample a batch of $N_{b}$ tuples from $\mathcal{B}_\mathrm{C}$;
            \STATE Calculates $\hat{\textbf{g}}_i^{(t)}$ by \eqref{eq:mem}-\eqref{eq:GE};
            \STATE Update $\Psi$ according to \eqref{eq:celoss} via Adam optimizer;
        \ENDFOR
    \ENDFOR   
    \\
    \textbf{// Training of low-level policy}
    \STATE Train the low-level policy $\bm{\pi}_L$ by Algorithm \ref{alg:low};
    \\
    \textbf{// Fine-tuning }
    \STATE  Fine tuning the high-level modules $\Psi$, $\Theta_H$, $\phi_H$ in the environment by step 4-20, but the actions $\textbf{u}_i^{(t)}$ is calculated by the low-level policy $\bm{\pi}_L$.
    \ENSURE The trained $\Psi$, $\Theta_H$, $\Theta_L$;
\end{algorithmic}
\end{algorithm}

\subsection{Experimental Settings}
During the experiments, we utilize some common training techniques in MAPPO such as orthogonal initialization, gradient clipping, and value normalization, while the value functions of both the high- and low-level policies are implemented by $3$-layer MLP with a hidden size of $128$.
The training data is collected in $20$ threads simultaneously and the update epoch of PPO is $15$. Adam optimizer is used with a learning rate of $1\times 10^{-4}$.
Meanwhile, aligned with MARL-related mainstream UAVs studies for MC-PEG\cite{hu2024transfer}, \cite{Zhang2023Games}, we focus on the two-dimensional experiments with a fixed altitude, where the coordinates of the UAVs are randomly initialized in the range $[-2, 2]$ m along the $x$ and $y$ axes.
Moreover, the key parameter settings of the environment are summarized in Table \ref{tab:para_env} and the specific settings of each level are as follows.

\subsubsection{Low-level settings}
In our experiments, the low-level policy is updated for $500$ episodes, each of which has $100$ time-steps and the anchor point is randomly initialized in the range $[-5, 5]$ m for both $x$ and $y$ axes. 
The formation \& navigation module, obstacle avoidance module and the output layer in \eqref{eq:lowlevel} are all composed of $3$-layer MLP with a hidden size of $128$ for all $c\in \mathcal{C}$.
In each step of AT-M, we set up different environments as mentioned in Sec. \ref{sec:single} with obstacle densities of $0$ and $3\times 10^{-2}$/m\textsuperscript{2}, respectively, for training.
However, when the number of agents is small, direct training in an environment with obstacles already leads to satisfactory results, and recombining modules may reduce performance. Therefore, in subsequent performance comparison, we only focus on the pattern greater than $5$.

\subsubsection{High-level settings}
The high-level policy is updated for $1,000$ episodes during pre-training and $500$ episodes during fine-tuning, each of which has $400$ time-steps but anchor points are generated by the high-level policy every $10$ steps.
The predator primarily uses a single-agent PPO policy, with the same network structure as the low-level policy of UAVs, and is set to chase the nearest group with at least $3$ members and avoid obstacles.
For convenience, the discrete coordinate is used for the high-level policy, and the set of possible anchor points in $x$-axis and $y$-axis is $\{-8,0,8\}$ m, implying a $9$-dimensional high-level action space.
The distance encoder consists of a linear layer, while the global estimator is implemented by a $3$-layer MLP and the rest of the MLP-based modules have $2$ layers. 
Besides, we implement a $4$-head attention layer, and the dimension of the message $\textbf{m}$ and attention layer is $64$, while the hidden size of other MLPs is $128$. 
Moreover, after every $20$ episodes of interaction, the collected data are utilized to train ConsMAC for $5,000$ times with a batch size of $2,048$.
We assume the number of target areas is set to $2$ (i.e. $|\mathcal{T}|=2$), with each area randomly selected from the set of coordinates $\{(-8,-8),(-8,8),(8, -8),(8,8)\}$ m.

\begin{table}[tbp]
    \vspace{-1.5em}
    \centering
    \caption{The key parameter settings of the environment.}
    \label{tab:para_env}
    \begin{tabular}{c|c}
    \toprule
    \textbf{Parameters} & \textbf{Settings} \\
    \midrule
     Number of UAVs & $N = 8$ \\
     The set of possible quantities & $\mathcal{C}=\{3,4,5,6,7,8\}$ \\
     Range of velocity per UAV (m/s) & [$-1$, $1$]\\
     Range of adversary velocity (m/s)  &[$-0.75$ , $0.75$]\\
     Observation distance (m) & $\delta_{\mathrm{obs}} = 3$  \\
     Reward weights of $\textbf{R}_\mathrm{L}$ & $(\omega_\mathrm{f},\omega_\mathrm{n},\omega_\mathrm{c})= (15, 4, 100)$\\
     Reward weights of $\textbf{R}_\mathrm{H}$ & $(\omega_\mathrm{t},\omega_\mathrm{n},\omega_\mathrm{e})= (10, 0.1,  100)$\\
     PPO Hyperparameter  & $(\gamma, \varepsilon, \lambda) = (0.8, 0.2, 0.95)$ \\
    \bottomrule
    \end{tabular}
    \vspace{-1.em}
\end{table}

\subsection{Performance of Low-Level Policy}\label{sec:lowlevel}
\subsubsection{Evaluation Metrics}
We evaluate the formation performance in terms of the average \emph{reward} per time step $\textbf{R}_\mathrm{L}$, the \emph{formation stability} \textbf{F}, the \emph{navigation efficiency} \textbf{N}, and average \emph{collision probability} \textbf{C}. Specifically, the formation stability \textbf{F} counts the average time of formation maintenance per episode 
(i.e., the time when the HD-based formation error defined in Appendix \ref{sec:formationreward} is less than $1$ m), while the formation accuracy complies with the required $<1.5$ m positioning accuracy in D2D communication\cite{wang2023recent}. 
For navigation tasks, the navigation efficiency \textbf{N} quantifies the average time staying in proximity to the destination (i.e., $\|\bar{\textbf{p}}^{(t)}-\textbf{p}_a\|<1$ m), such a positional accuracy in swarms can significantly enhance payload deployment success rates\cite{vadduri2023precise}. Additionally, the average collision probability \textbf{C} evaluates obstacle avoidance and 
the average reward per time step  $\textbf{R}_\mathrm{L}$ provides a comprehensive performance evaluation aligned with operational objectives.

\begin{table}[t]
\vspace{-.5em}
\centering
\caption{Performance comparison of AT-M with baselines. }
\label{tab:atmappo}
\vspace{-.5em}
\begin{tabular}{c|c|cccc}
\toprule
c & Method 
&  $\textbf{R}_\mathrm{L}$ $\uparrow $ & \textbf{F} (s) $\uparrow$ & \textbf{N} (s) $\uparrow$ & \textbf{C} ($\%$) $\downarrow $ \\ 
\midrule

\multirow{7}{*}{$6$}  & MADDPG\cite{lowe2017multi}
& - & $58.12$ & $4.26$ & - \\
&ORCA-F\cite{sui2020formation}
& $-92.55$ & $24.52$ & $45.06$ & $0.40$ \\
&CL-M\cite{yan2022relative}
& $-90.79$  & $48.80$ & $38.94$ & $0.78$\\
&AT-M-Step 1
& $-424.45$ & $\mathbf{59.64}$ & $\mathbf{46.20}$ & $11.40$   \\
&AT-M-Step 2
& $-96.56$ & $17.7$ & $35.50$ & $0.86$ \\
&\textbf{AT-M-Step 3}
& $-96.00$ & $49.72$ & $36.66$ & $0.78$\\
&\textbf{AT-M-Safe}
& $\mathbf{-46.28}$ & $29.54$ & $24.24$ & $\mathbf{0.36}$\\

\midrule
\multirow{7}{*}{$7$}  & MADDPG\cite{lowe2017multi}
& - & $0.84$ & $\mathbf{46.94}$ & - \\
&ORCA-F\cite{sui2020formation}
& $-118.10$ & $15.74$ & $35.50$ & $0.48$ \\ 
&CL-M\cite{yan2022relative}
& $-113.69$ & $29.72$ & $15.28$ & $0.74$ \\
&AT-M-Step 1
& $-660.84$ & $\mathbf{55.78}$ & $40.26$ & $19.46$ \\ 
&AT-M-Step 2
& $-125.84$ & $8.14$ & $28.40$ & $1.08$ \\
&\textbf{AT-M-Step 3}
& $-94.32$ & $40.24$ & $24.26$  & $0.74$ \\
&\textbf{AT-M-Safe}
& $\mathbf{-55.61}$ & $23.92$ & $16.88$ & $\mathbf{0.46}$\\

\midrule
\multirow{7}{*}{$8$}  & MADDPG\cite{lowe2017multi}
& - & $0.02$ & $49.30$ & -  \\
&ORCA-F\cite{sui2020formation}
& $-195.23$ & $7.74$ & $28.52$ & $0.66$ \\ 
&CL-M\cite{yan2022relative}
& $-121.90$ & $29.30$ & $34.08$ & $1.30$  \\
&AT-M-Step 1
& $-506.48$ & $\mathbf{45.84}$ & $\mathbf{53.92}$ & $13.20$  \\ 
&AT-M-Step 2
& $-266.94$ & $0.10$ & $1.68$ & $2.56$ \\
&\textbf{AT-M-Step 3}
& $-106.94$ & $41.52$ & $36.90$ & $1.14$ \\
&\textbf{AT-M-Safe}
& $\mathbf{-72.57}$ & $20.26$ & $26.60$ & $\mathbf{0.62}$\\
\bottomrule
\end{tabular}
\vspace{-1.4em}
\end{table}
\subsubsection{Performance Comparison}\label{sec:low_compare}
To test the performance of the AT-M under more severe conditions with denser obstacles, we have increased the density of obstacles to $5\times 10^{-2}$/m\textsuperscript{2}.
We evaluate the curriculum learning-based formation \cite{yan2022relative}, denoted as CL-M, classical MADDPG\cite{lowe2017multi}, traditional optimal reciprocal collision avoidance (ORCA) based formation ORCA-F\cite{sui2020formation}, and our AT-M of each step in MPE. Table \ref{tab:atmappo} shows the average results of $50$ episodes of tests. 
To address safety-critical scenarios, we further propose AT-M-Safe, a conservative variant of AT-M. By slightly adjusting LiDAR parameters (Instead of $d_{im}^{(t)}$, $d_{im}^{(t)}-0.2$ is used in \eqref{eq:lowlevel} when the $m$-th LiDAR of the UAV detects an obstacle), UAVs initiate precautionary avoidance measures earlier.
The results of AT-M-Step 2 indicate that initializing a model in the case of dense obstacles leads to excessive collision penalties, hindering the learning of effective formation strategy, but it still maintains commendable obstacle avoidance capabilities. 
Furthermore, it can be observed that our proposed AT-M (i.e., AT-M-Step 3) outperforms other baselines in terms of overall performance balance when the number of agents increases, which 
demonstrates that introducing a well-trained obstacle avoidance model into a formation-capable model and fine-tuning the integrated model can significantly produce appealing performance. 
In addition, MADDPG \cite{lowe2017multi} is also trained in an environment without obstacles, but even so, when pattern $c$ is larger than $6$, MADDPG can no longer learn effective strategies, and its training overhead is very large compared to other methods.
Meanwhile, ORCA-F \cite{sui2020formation}, which is based on ORCA and a fully connected leader-follower framework, can greatly reduce the probability of collisions. However, ORCA-F performs poorly when integrated with downstream formation and navigation tasks, and becomes overly conservative as the number of agents increases, leading to further performance degradation, which highlights its limitations.
Meanwhile, since the collision avoidance reward in \eqref{eq:coll} penalizes proximity to obstacles for proactive avoidance, AT-M-Safe achieves superior collision avoidance than ORCA-F while maintaining formation and navigation capabilities, further validating the universality and flexibility of the AT-M methodology. 
Nevertheless, AT-M-Safe is conservative as well, with significantly degraded formation and navigation performance compared to AT-M-Step 3. Thus, it serves as a backup for dense obstacle environments, while AT-M-Step 3 suffices for subsequent experiments.
Different from AT-M, the CL-M \cite{yan2022relative} involves gradually increasing obstacle density during training.
Notably, the performance gap between AT-M and CL-M gradually increases as the number of agents increases, since the curriculum learning method gradually enhances obstacle perception and inadvertently leads to a partial and irreversible loss of formation ability.
In contrast, our AT-M can achieve obstacle avoidance while maintaining the original formation ability by fusing models from different stages and fine-tuning.
\begin{figure}[tbp]
\vspace{-1.5em}
    \centering
\subfigure[Curve of loss during policy distillation.]{
\includegraphics[width = 0.23\textwidth]{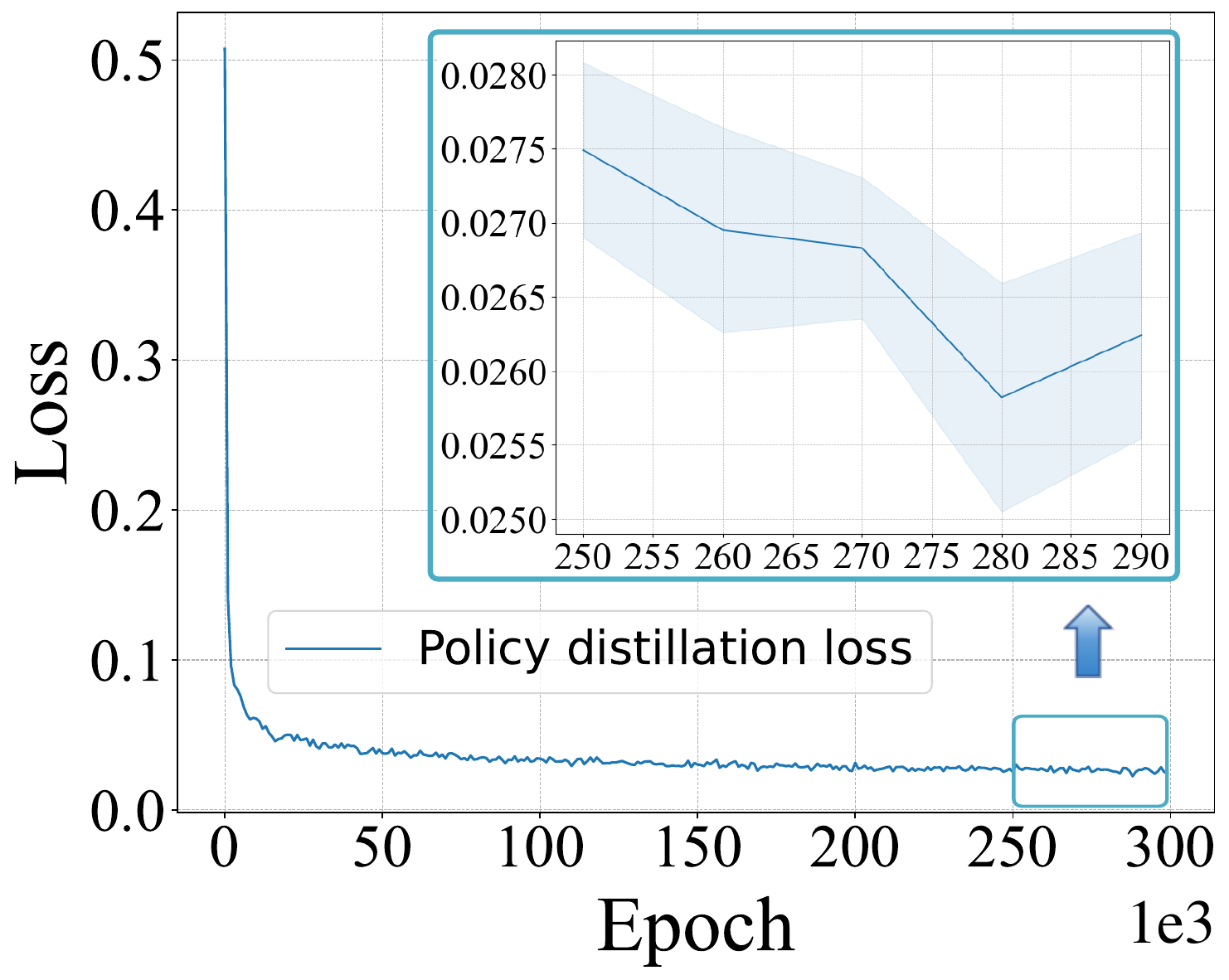}\label{fig:kd_loss}}
\subfigure[Formation reward when a random agent drops out every 200 time steps.]{
\includegraphics[width = 0.23\textwidth]{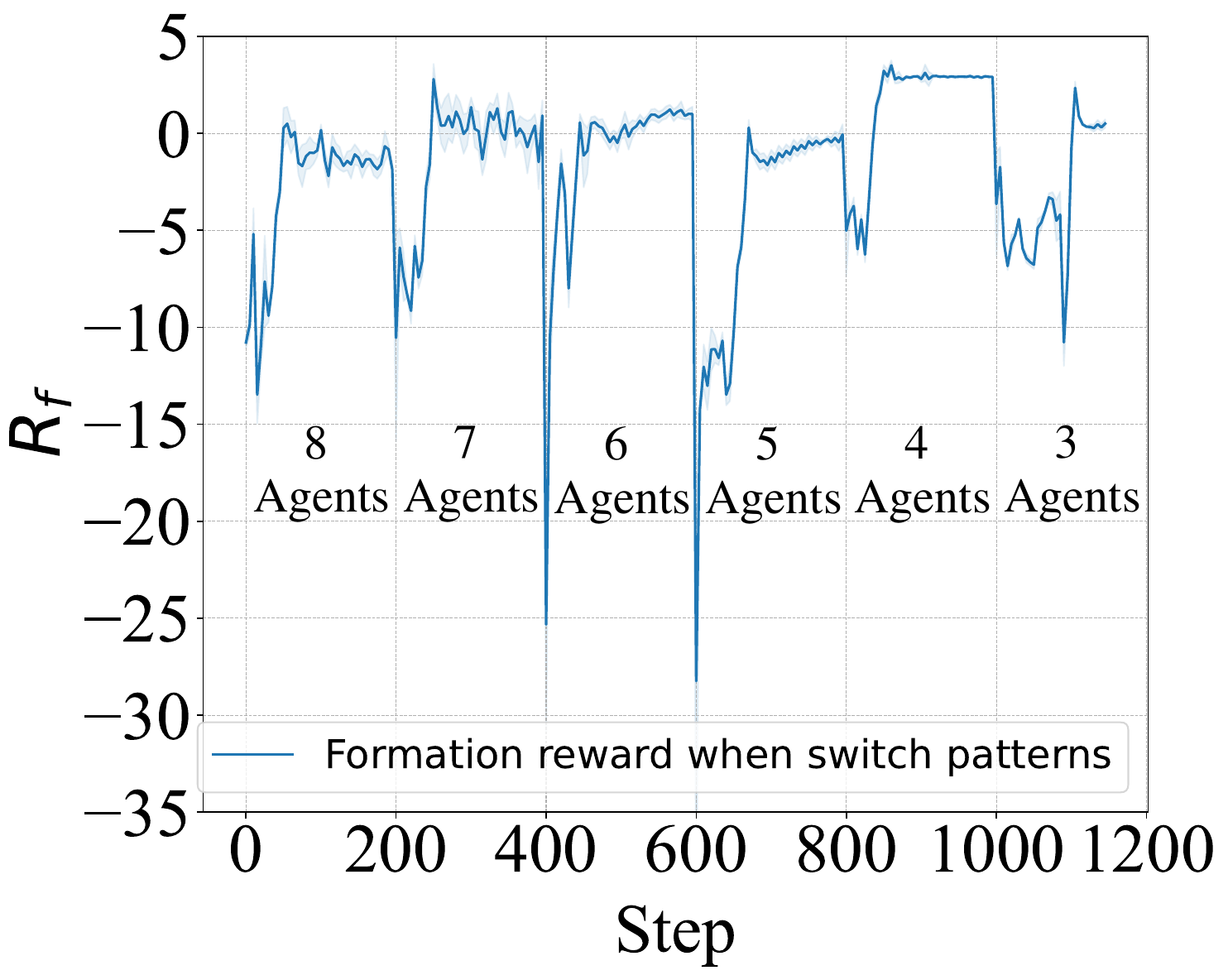}\label{fig:ad}}
\vspace{-0.5em}
\caption{Simulation results of adaptive formation.}
\vspace{-1.2em}
\end{figure} 

\subsubsection{Adaptive Formation Results}
For the adaptive formation mentioned in Sec. \ref{sec:adaptive}, we pre-train six formation models with the pattern $c$ ranging from $3$ to $8$ in Fig. \ref{fig:forma}, and store the corresponding tuples $\langle \textbf{p}_a, \textbf{z}, \textbf{u} \rangle$ for $60,000$ time-steps as the replay memory $\mathcal{B}$. Besides, the hidden size of the distilled model is $256$ and the Adam optimizer is used with a learning rate of $1\times 10^{-4}$. 
Meanwhile, the distilled student model is trained for $300,000$ episodes and the batch size of each episode of sampling is $600$. The curve of loss during policy distillation is shown in Fig. \ref{fig:kd_loss}, while Fig. \ref{fig:ad} illustrates the curve of formation reward over time. 
In particular, given the initial existence of $8$ agents, some randomly selected UAVs are assumed to be no longer observed. In the adaptive formation task, agents are supposed to perceive the change in the fleet number itself and adapt the formation policy swiftly.

\vspace{-1.em}
\subsection{Performance of ConsMAC}\label{sec:consmac}
\subsubsection{Effectiveness and Superiority of ConsMAC}
We compare our ConsMAC module with three representative MARL communication methods (i.e., the attention-based message aggregation method TarMAC \cite{das2019tarmac}, the supervised learning-based information extraction method MASIA \cite{guan2022efficient} and the SOTA communication method NVIF \cite{chai2023nvif}), 
and the communication content and overhead of each method are shown in Table \ref{tab:communication}.
In addition, we add the original MAPPO and the latest MARL SOTA algorithm HASAC \cite{liu2024maximum} without communication to reflect the effect of the communication module.
Fig. \ref{fig:consmac_compare} presents the corresponding performance comparison while Table \ref{tab:communication} lists the communication costs.
As shown in Fig. \ref{fig:consmac_compare}, ConsMAC significantly outperforms other baselines, and it is worth noting that both ConsMAC-A and ConsMAC-O, which refer to training ConsMAC by using global outputs of anchor points or global observations of all agents as labels, respectively, significantly improve the overall performance, demonstrating the robustness and generality of our method.
On the other hand, TarMAC accelerates the convergence rate by introducing the implicit communication vector, 
but its final performance is not as good as the original MAPPO, possibly due to that it does not guide the communication content and produces invalid information.
Besides, both the convergence rate and the final effect of MASIA are also reduced in the local communication environment, which indicates that only transmitting the original observation information cannot provide enough guidance for the agent to make decisions.
Notably, NVIF 
converges as fast as TarMAC and ultimately outperforms both TarMAC and MASIA, demonstrating its improvements over existing communication methods. However, it still lags behind ConsMAC and MAPPO, indicating insufficient guidance in communication content. Meanwhile, 
when the number of training iterations aligns with other baselines, HASAC merely leads to significantly inferior results.

\begin{figure}[tbp]
\vspace{-2.em}
\centering
\includegraphics[width = 0.4\textwidth]{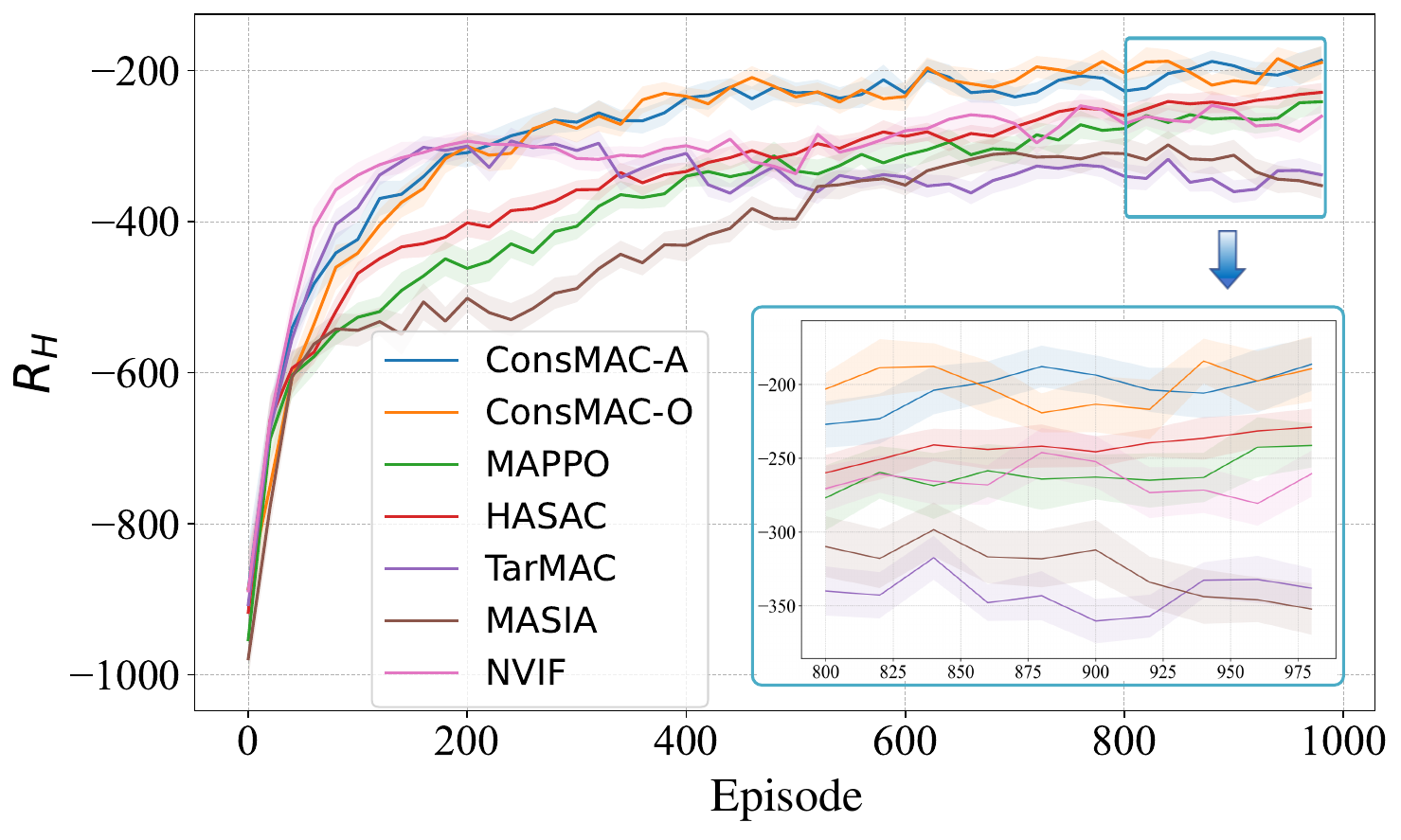}
\vspace{-.8em}
\caption{Comparison to other communication methods.}
\vspace{-.2cm}
\label{fig:consmac_compare}
\end{figure}

\begin{table}[!tbp]
    \centering
    \vspace{-0.8em}
    \caption{Communication Overhead of each method.}
    \label{tab:communication}    
    \vspace{-0.5em}
    \begin{tabular}{c|c|c} 
    \toprule
    {Method}  & Communication Content  & Dimension \\
    \midrule
    {MAPPO}\cite{yu2022surprising}& -  & $0$ \\
    HASAC\cite{liu2024maximum} & -  & $0$ \\
    {TarMAC}\cite{das2019tarmac} & Latent Vector & $64$ \\
    {MASIA}\cite{guan2022efficient}  & Local Observation & $38$ \\
    {NVIF}\cite{chai2023nvif}   & Latent Vector + Local Observation & $102$\\
    \textbf{ConsMAC (Ours)}  & Latent Vector & $64$ \\
            \bottomrule
    \end{tabular}
    \vspace{-.5em}
\end{table}

\begin{figure}[t]
\vspace{-2em}
\centering
\includegraphics[width = 0.38\textwidth]{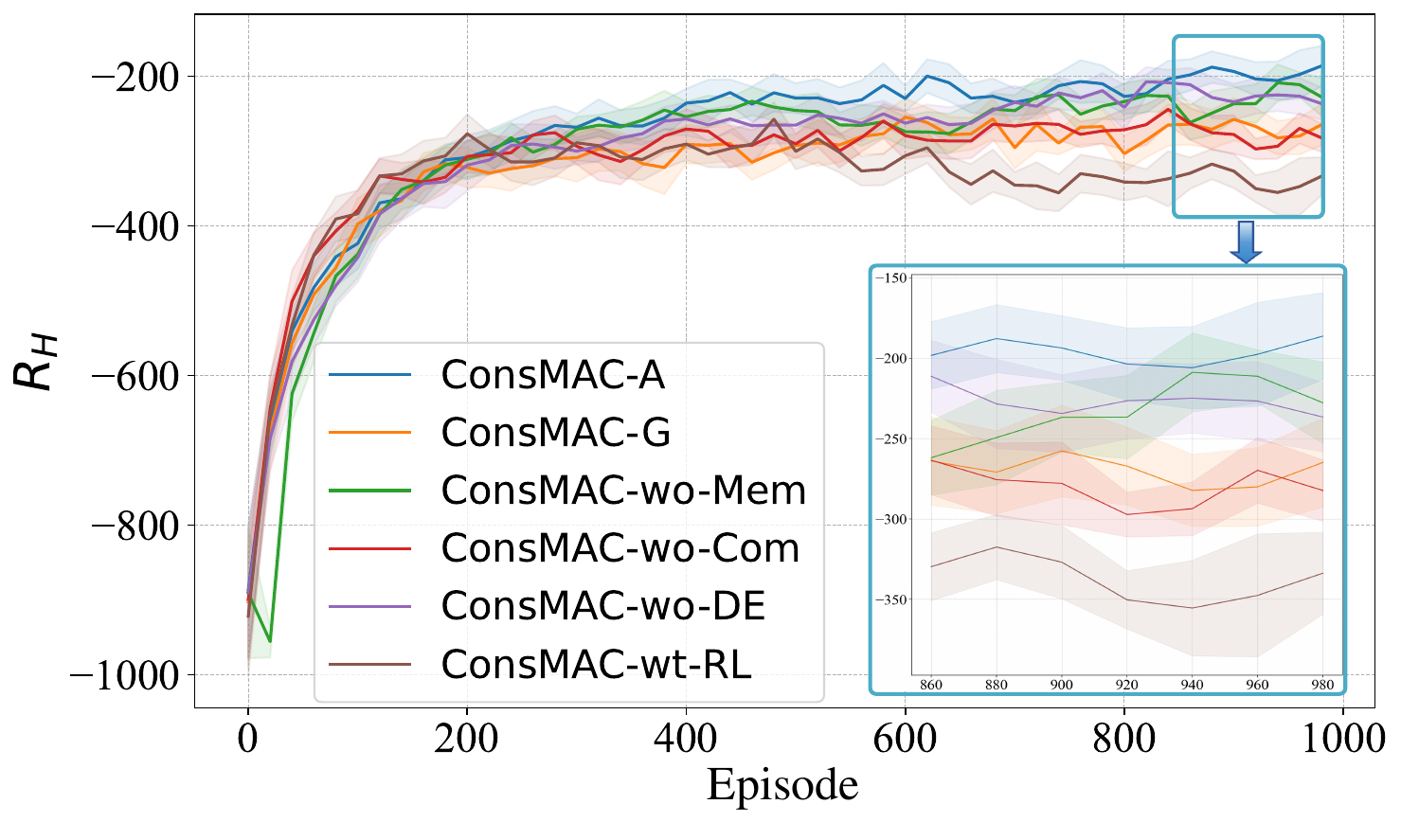}
\vspace{-1.em}
\caption{The ablation results of ConsMAC-A.}
\label{fig:consmac_ablation}
\vspace{-.2em}
\end{figure}

\subsubsection{Ablation Study of ConsMAC-A}
We perform the ablation study of ConsMAC-A to further demonstrate the effectiveness of each part in ConsMAC.
Fig. \ref{fig:consmac_ablation} illustrates the results between ConsMAC-A and some variants. ConsMAC-wo-Com denotes the ConsMAC without communicated messages, while we remove the memory module and distance encoder in ConsMAC-wo-Mem and ConsMAC-wo-DE, respectively.
Moreover, ConsMAC-wt-RL uses both supervised learning loss and RL loss to update ConsMAC.
The results in Fig. \ref{fig:consmac_ablation} verify the individual contribution of each module. Meanwhile, ConsMAC-G uses the estimated state $\hat{\textbf{g}}$ in \eqref{eq:GE} as the communicated message instead of $\textbf{m}$, and the results show that the effect of communicating the hidden layer vector is significantly better than that of communicating $\hat{\textbf{g}}$, and greatly contributes to the overall performance improvement.
Besides, it is worth noting that adding the RL loss to the ConsMAC reduces the performance, which demonstrates the importance of independent learning in ConsMAC.

\begin{figure}[!t]
\centering
\vspace{-.2em}
\includegraphics[width = 0.35\textwidth]{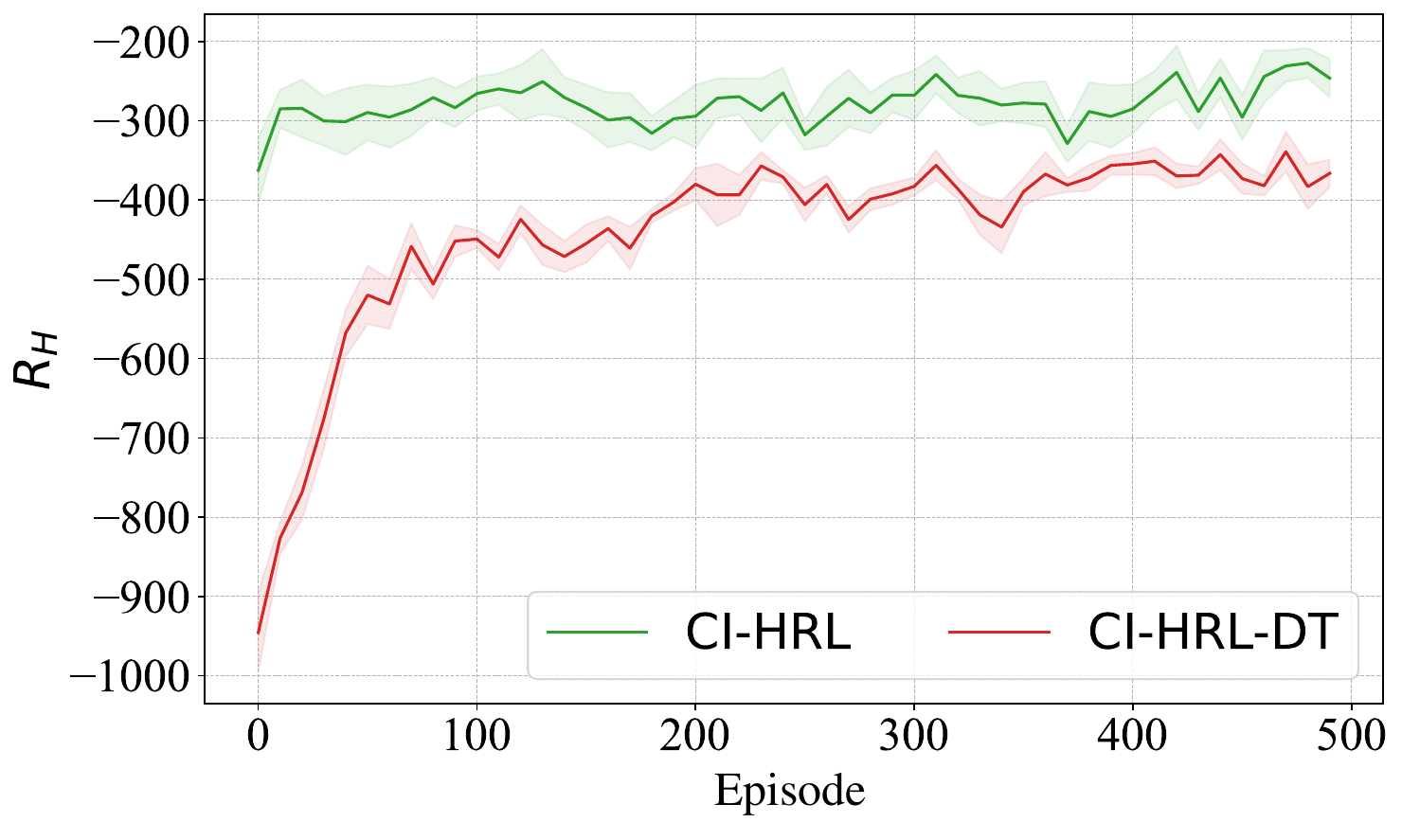}
\vspace{-0.5em}
\caption{The effectiveness of fine-tuning hierarchical policies in CI-HRL.}
\label{fig:ft}
\vspace{-0.4em}
\end{figure} 

\begin{table}[!t]
\vspace{-0.8em}
\centering
\caption{Performance comparison of CI-HRL with baselines. }
\label{tab:cihrl}
\vspace{-.5em}
\setlength{\tabcolsep}{2.2pt}
\begin{tabular}{c|ccccc}
\toprule
Method & $\textbf{R}_{\mathrm{H}}$ $\uparrow $  & $ R_{\mathrm{t}}$ $\uparrow $  & $ R_{\mathrm{n}}$ $\uparrow $ & $ R_{\mathrm{e}}$ $\uparrow $ & $ \textbf{E}$ $\downarrow $ \\ 
\midrule
MAPPO\cite{yu2022surprising}+AT-M & $-397.29$ & $54.33$ & $-334.95$ & $-116.68$ & $8.78$  \\ 
HASAC\cite{liu2024maximum}+AT-M & $-381.07$ & $105.70$ & $-347.82$ & $-138.95$  & $11.42$  \\  
TarMAC\cite{das2019tarmac}+AT-M & $-432.81$ & $29.81$ & $-367.06$ & $-95.56$ & $7.16$  \\ 
MASIA\cite{guan2022efficient}+AT-M & $-404.45$ & $31.89$ & $-336.17$ & $-100.19$ & $7.72$  \\ 
NVIF\cite{chai2023nvif} +AT-M  & $-403.79$ & $16.12$ & $-340.58$ & $-79.33$ & $5.94$ \\
\midrule
CI-HRL-w-CL-M-wo-FT  & $-408.55$ & $38.81$ & $-326.08$ & $-121.29$  & $9.48$ \\
CI-HRL-w-CL-M  & $-320.69$ & $74.13$ & $\mathbf{-288.11}$ & $-106.70$  &  $8.40$ \\
CI-HRL-DT & $-366.95$ & $79.72$ & $-381.52$ & $-65.16$ & $5.16$ \\ 
CI-HRL-wo-FT & $-351.27$ & $58.31$ & $-301.65$ & $-107.93$ & $7.66$  \\ 
\textbf{CI-HRL (Ours)} & $\mathbf{-281.56}$ & $\mathbf{107.37}$ & $-327.27$ & $\mathbf{-61.66}$ & $\mathbf{4.46}$ \\ 
\bottomrule
\end{tabular}
\vspace{-2.em}
\end{table}

\subsection{Performance of CI-HRL}\label{sec:cihrl}

\subsubsection{MPE Simulation}
\paragraph{Effectiveness and Superiority of CI-HRL}
On the basis of ConsMAC-A, we follow the Algorithm \ref{alg:high} and fine-tune the overall CI-HRL with the trained low-level policy.
Besides, we directly train the high-level policy with the low-level policy for comparison, denoted as CI-HRL-DT, while we suffix CI-HRL methods without the last fine-tuning by ``wo-FT''.
Fig. \ref{fig:ft} compares the learning curves of both fine-tuning and direct-training.
Furthermore, in Table \ref{tab:cihrl}, we evaluate each method for $50$ episodes and record the rewards for all parts, where $\textbf{E}$ denotes the average time of dangerous situations per round 
(i.e., the distance between agents and the adversary is less than $2$ m).
The related results indicate that the high-level decision-making in CI-HRL significantly outperforms other MARL algorithms, achieving larger task rewards in target areas. 
Meanwhile, due to variations in the low-level strategies, a direct concatenation of these components cannot score more effectively. Nevertheless, the pre-trained CI-HRL still performs better on average reward and navigation efficiency than direct joint training.
Moreover, the joint fine-tuning discussed in Section \ref{sec:joint-training} contributes to improving the performance. 
In addition, CL-M in Section \ref{sec:low_compare} is used as the low-level policy of CI-HRL, and the resulting model CI-HRL-w-CL-M severely underperforms its AT-M counterpart, but the joint fine-tuning improves the overall performance significantly, demonstrating an optimized high-level policy can compensate for the shortcomings of a less-performing low-level one.

\begin{table}[!tbp]
\vspace{-1.em}
\centering
\caption{Evaluation results of different adversary strategies. }
\label{tab:adversary}
\vspace{-.5em}
\begin{tabular}{c|cccc}
\toprule
 & PPO & DDPG & R-Largest & R-Nearest \\
\midrule
$R_{\mathrm{t}}$
& $107.37$ 
& $161.97$
& $151.74$ 
& $187.87 $
\\
$ R_{\mathrm{e}}$
& $-61.66$ 
& $-82.43$
& $-55.48 $
& $-79.54 $
 \\
$ \textbf{E}$
& $4.46 $ 
& $6.42$
& $4.44$ 
& $5.80$
\\
\bottomrule
\end{tabular}
\vspace{-1.em}
\end{table}

\begin{table}[!tbp]
\vspace{-0.5em}
\centering
\caption{Generalization Results of CI-HRL in large-scale groups. }
\label{tab:large}
\vspace{-.5em}
\begin{tabular}{c|ccccc}
\toprule
$N$ & $\textbf{R}_{\mathrm{H}} / N$ & $ R_{\mathrm{t}} / N $  & $ R_{\mathrm{n}} / N$ & $ R_{\mathrm{e}} / N $ & $ \textbf{E} / N $ \\ 
\midrule

$8$ &$-35.20$ &$13.42$ &$-40.91$ &$-7.71$ &$0.56$ \\ 
$9$ &$-36.80$ &$15.67$ &$-43.13$ &$-9.34$ &$0.68$ \\ 
$10$ &$-41.16$ &$10.11$ &$-43.61$ &$-7.66$ &$0.60$ \\ 
$12$ &$-42.26$ &$8.91$ &$-44.97$ &$-6.20$ &$0.49$ \\ 
$15$ &$-45.31$ &$4.09$ &$-41.46$ &$-7.95$ &$0.63$ \\ 

\bottomrule
\end{tabular}
\vspace{-1.5em}
\end{table}

\paragraph{Generalization Results}
To further validate the robustness and generalization of CI-HRL, we conduct extensive experiments under diverse adversary strategies and larger-scale scenarios respectively. 
In addition to the original PPO-based adversary, we also evaluate the performance of the CI-HRL under DDPG-driven and two different rule-based adversary policies (i.e., tracing the largest group, denoted as R-Largest, and targeting the nearest agent, denoted as R-Nearest).
Table \ref{tab:adversary} reveals the robust performance of CI-HRL against various adversary strategies. Notably, DDPG and R-Nearest strategies focus more on chasing the nearest agent rather than impeding the swarm's task completion, leading to lower evasion rewards. In other words, such an overemphasis on a single agent enhances the safety and task efficiency of the remaining agents, thereby improving the overall performance.
Furthermore, we expand the number of agents to $15$, and the agent-averaging results are shown in Table \ref{tab:large}.
Consistent with the trend in Table \ref{tab:atmappo}, along with the increase in the number of agents, the formation performance of low-level policy decreases, which subsequently impacts the overall task completion. However, the average navigation and evasion rewards remain relatively stable, indicating that the high-level policy can make superior decisions to compensate for the low-level performance degradation.
These results highlight the scalability and adaptability of our method to diverse adversarial dynamics, and validate its potential for deployment in complex environments, motivating us to investigate the practical performance of CI-HRL.

\begin{figure}[!tbp]
\vspace{-2em}
\centering
\includegraphics[width = 0.32\textwidth]{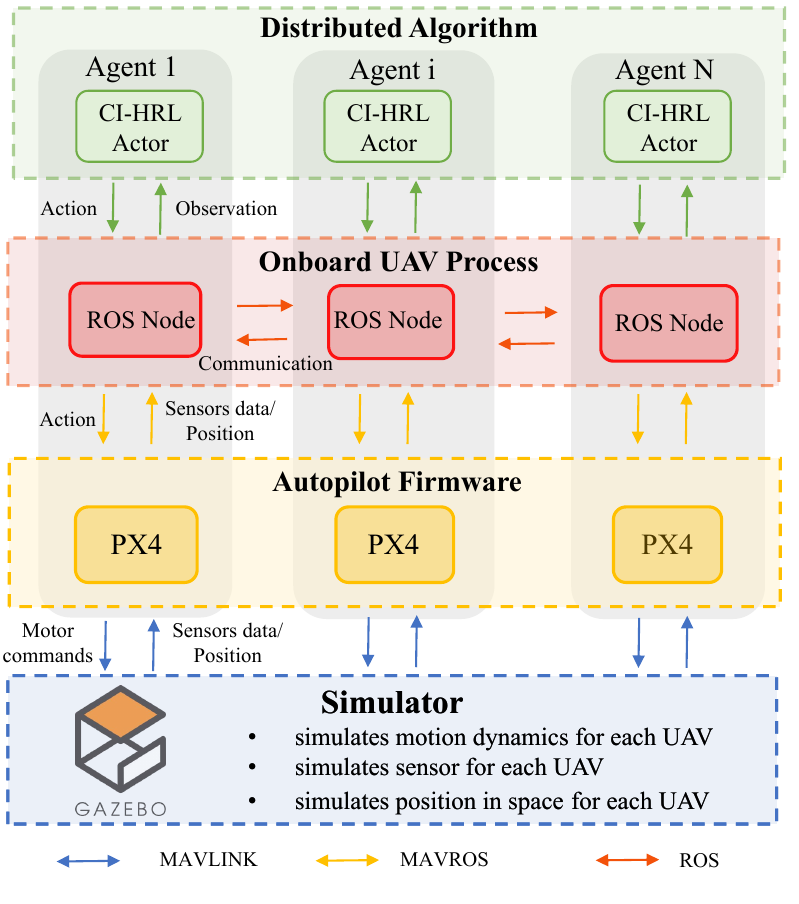}
\vspace{-1.5em}
\caption{The framework of SITL.}
\label{fig:sitl}
\vspace{-1.5em}
\end{figure} 

\begin{figure*}[!tbp]
\centering
\vspace{-1.5em}
\includegraphics[width = 0.75\textwidth]{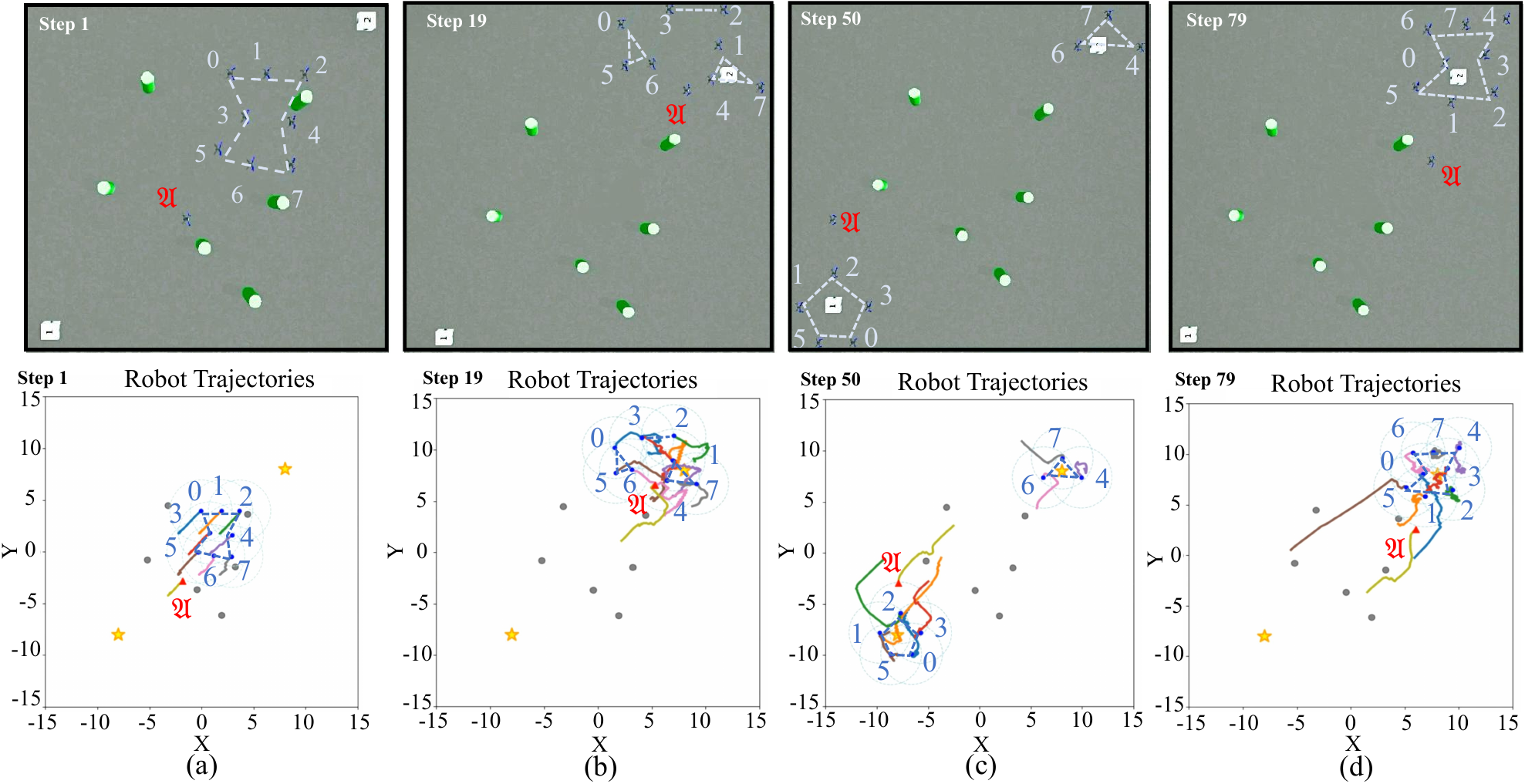}
\caption{\centering{Four typical cooperation scenarios in one episode of SITL. (a), (b), (c), and (d) stand for snapshots at high-level decision steps $1$, $19$, $50$, and $79$, respectively. }}
\label{fig:fourframe}
\vspace{-.5em}
\end{figure*} 
\begin{figure}[!t]
\centering
\includegraphics[width = 0.35\textwidth]{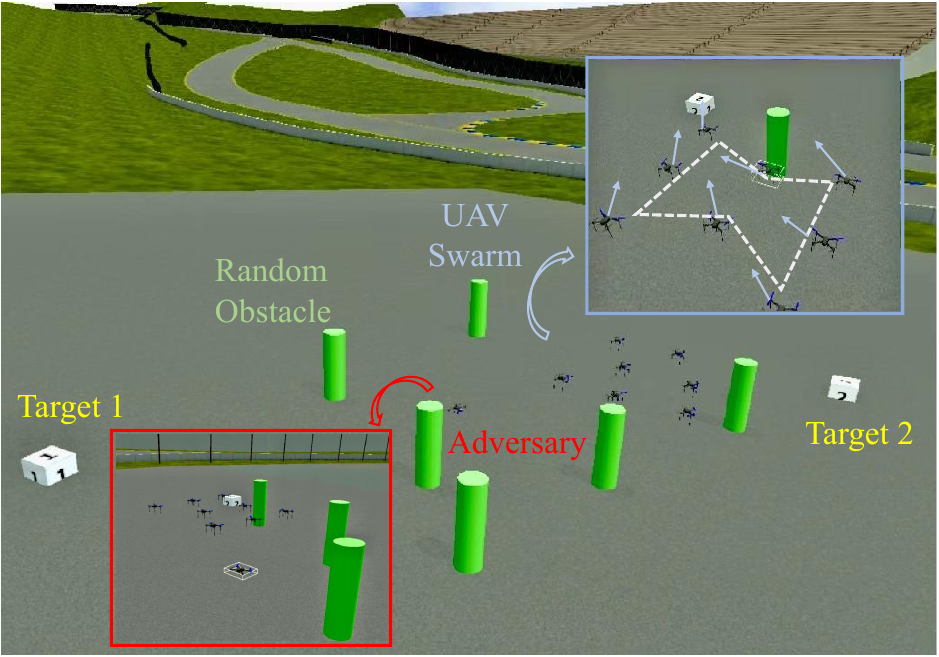}
\caption{Task overview in Gazebo Simulator for SITL.}
\label{fig:TaskOfSITL}
\vspace{-1.5em}
\end{figure}

\begin{figure*}[!ht]
\vspace{-1.5em}

\centering
\includegraphics[width = 0.77\textwidth]{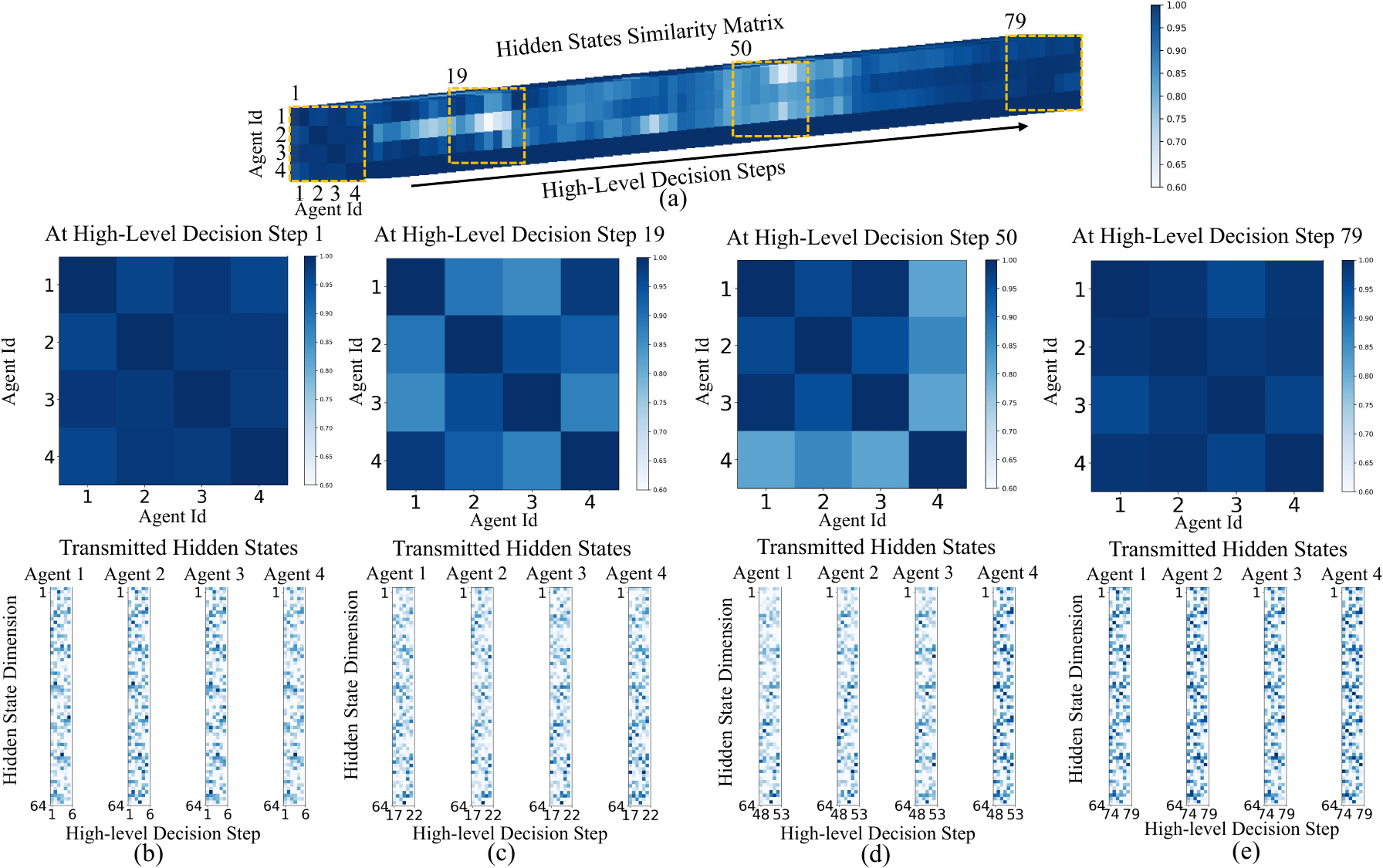}
\vspace{-.5em}
\caption{\centering{Transmitted hidden states $\textbf{m}$ for high-level decision steps and related similarity matrix analysis. (a) Hidden states similarity matrix variation for one episode. (b)$\sim$(e)  Illustration of transmitted hidden states and similarity matrices at Steps $1$, $19$, $50$, and $79$. }}
\label{fig:hiddenStates}
\vspace{-1.em}
\end{figure*} 

\subsubsection{SITL Simulation}\label{sec:sitl}
\paragraph{The Framework of SITL}
To verify the performance of the CI-HRL algorithm deployed on a quadrotor UAV in the real world, as shown in Fig. \ref{fig:sitl}, we develop a ROS-based SITL simulation environment with Gazebo-Classic physics simulator and PX4 autopilot \cite{meier2015px4} for quadrotor UAVs. Different from existing open-source SITL work such as XT-Drone \cite{xiao2020xtdrone}, in our simulation the UAV node makes fully distributed and asynchronous decisions based on partial observations within a limited communication range, and the CEFC task is simulated in high fidelity with multiple obstacles, target areas, and game scenario, as illustrated in Fig. \ref{fig:TaskOfSITL}.
Specifically, each UAV in offboard control mode corresponds to an independent Python flight control process (CI-HRL algorithm with ConsMAC-A). Moreover, based on the MAVROS protocol, each UAV subscribes to/publishes ROS topics via UDP connections to broadcast its states and observations regarding the adversary, obstacles, target areas and neighboring agents. Then, each UAV calculates the expected acceleration according to the CI-HRL output and publishes the result to the topic subscribed by PX4. 
Meanwhile, PX4 gets the UAV's real-time pose and sensor messages via a TCP connection with the Gazebo. Based on the received acceleration requirement, PX4 calculates motor and actuator values from the PID controller and sends to the Gazebo. Afterward, Gazebo determines the next frame's pose and sensor data according to the UAV's dynamic model and sends it back to PX4.

\paragraph{Strategy Analysis}
We depict four typical cooperation steps in one episode of SITL and the corresponding partial motion path based on the pose history from ROS topic in Fig. \ref{fig:fourframe}. Besides, to verify how CI-HRL works, we save the 64-dimension message $\textbf{m}_i^{(t)}$ that is transmitted to neighboring agents. Furthermore, we compute the cosine similarity of the messages from UAV $\{1, 2, 3, 4\}$ at each high-level decision step to analyze the target intention of agents, as an illustration in Fig. \ref{fig:hiddenStates}(a). Notably, higher cosine similarity of messages implies more similar intention and consistent anchor points, since the communicated messages $\textbf{m}_i^{(t)}$ imply the prediction of the global state as \eqref{eq:GE} and the selection of anchor points largely depends on the aggregated message $\textbf{m}_i^{(t + 1)}$ followed by the CI-HRL as \eqref{eq:PE}. It can be observed from Fig. \ref{fig:fourframe}(a) that at the beginning of the simulation (i.e., Step $1$), given the existence of the adversary, all agents take unanimous decisions towards the top right corner target rather than the target $\textbf{p}_1=(-8, -8)$. The similarity matrix shown in Fig. \ref{fig:hiddenStates}(b) suggests all the message similarity among four agents is higher than $0.9$, standing for agents' consensus on the target decision at the moment. At Step $19$, as shown in Fig. \ref{fig:fourframe}(b), due to the adversary's approach, the corresponding agents, which originally formed in $\Delta_8$, undergo an adaptive transformation. Specifically, Agents $\{1, 4, 7\}$ form an $\Delta_3$ group and move in directions different from Agents $\{2, 3\}$ and $\{0, 5, 6\}$, who seem to be responsible for diverting the adversary's attention, according to the similar intention of $\{1,  4\}$ and $\{2, 3\}$ in Fig. \ref{fig:hiddenStates}(c). During the adversary approaches to the left side at Step $50$, Agents $\{4,6,7\}$ reassess the safety of the target area $\textbf{p}_2=(8, 8)$ and automatically form $\Delta_3$, while Agents $\{0,1,2,3,5\}$ proceed to the anchor point $(-8,-8)$ and adaptively form a $\Delta_5$ group, as Fig. \ref{fig:fourframe}(c) shows. Correspondingly, it can be observed from Fig. \ref{fig:hiddenStates}(d) that the communicated messages of Agents $\{1, 2, 3\}$ maintain a similarity higher than $0.9$, while the similarity with Agent $4$ turns less than $0.85$. Such an observation is consistent with the case where agents in two different groups choose different anchor points. 
Later at Step $79$, with the adversary moving close to $(-8,-8)$, Agents $\{0, 1,2,3,5\}$ escape from $\textbf{p}_1$, reach the anchor point $(8,8)$ and eventually form $\Delta_8$ as shown in Fig. \ref{fig:fourframe}(d), when the messages become highly similar again as shown in Fig. \ref{fig:hiddenStates}(e).

\paragraph{Impact of Real-World Factors}
To validate the robustness of our algorithm deployed at real-world UAV systems, we simulate more practical factors (e.g. winds and sensing deviation) with Gazebo plugins.
Table \ref{tab:realistic factor} records the times of critical events such as formation, navigation, and collision probability during $1,000$-step game in SITL. 
It shows that AT-M can competently handle environmental dynamics.
For example, even confronted with strong wind, whose speed follows a Gaussian distribution with a mean of $8$ m/s, the formation and navigation completion remains approximately $78.4\%$ compared to that in a calm environment.
Similarly, AT-M sustains $80\%$ of its performance even when its input is skewed by a Gaussian distribution with a mean of $0.8$ m. It is safe and reliable until the deviation is minor than the obstacle radius which is $0.4$ m in our case.
Furthermore, to verify the effectiveness in a realistic distributed architecture, we deploy it across multiple devices in a LAN environment. For example, an NVIDIA Jetson TX2 NX is assigned to run a single onboard agent process, requiring $12.3$ ms per decision, while a GeForce RTX 4090-equipped computer is responsible for other $7$ agents, consuming $0.96$ ms per decision. Owing to the gap of computing efficiency, the average completion rate of asynchronous AT-M, denoted as SITL-M in Table \ref{tab:realistic factor}, reaches $97.1\%$ of the performance observed in simulation on a single device (denoted as SITL-S). 
As for CI-HRL, the average effective formation rate in $50$ episodes of SITL-M is equivalent to $92.7\%$ of that achieved by SITL-S.


\begin{table}[tbp]
\vspace{-0.8em}
\centering
\caption{Performance Validation of AT-M in SITL with different sensing deviation, wind speed and hardware deployment.}
\label{tab:realistic factor}
\vspace{-.5em}
\setlength{\tabcolsep}{5.8pt}
\begin{tabular}{c|ccc|ccc}
\toprule

Critical Event & \textbf{F} (s)  & \textbf{N} (s)  & \textbf{C} ($\%$)  & \textbf{F} (s)  & \textbf{N} (s)  & \textbf{C} ($\%$) 
\\ 
\midrule
$c$ & \multicolumn{3}{c|}{$5$} & \multicolumn{3}{c}{$6$} 
 \\ 
\cmidrule(r){1-1} \cmidrule(lr){2-4}\cmidrule(l){5-7} 
SITL-S
& $693$ & $722.4$ & $0 $ 
& $711$& $645$ & $0$  

\\
Wind-3m/s
&$669.3$ & $651.3$ & $0$ 
& $637$ & $644$ & $0 $
 \\
Wind-5m/s
&$643.2$ & $579.2$ & $0$   
& $586$ & $632.5$ & $0$
 \\
Wind-8m/s
& $558$ & $549$ & $0$  
& $530$ & $628$ & $0$ 
\\
Deviation-0.3m
&$683$ & $652$ & $0$ 
& $667$ & $639$ & $0$ 
 \\
Deviation-0.5m
&$647.5$ & $628.7$ & $0$  
& $616$ & $623$ & $0.33$ 
\\
Deviation-0.8m
& $545.5$ &$605.5$ & $2.95$
&$534$ &$551$ & $1.20$ 
 \\ 
\cmidrule(r){1-1} \cmidrule(lr){2-4}\cmidrule(l){5-7} 
SITL-M
& $652.8$ & $711.1$ & $0$  
& $676.1$ & $647.9$ & $0$
\\
\midrule
$c$ & \multicolumn{3}{c|}{$7$} & \multicolumn{3}{c}{$8$} 
 \\ 
\cmidrule(r){1-1} \cmidrule(lr){2-4}\cmidrule(l){5-7} 
SITL-S
& $746$& $707$ & $0$ 
& $712$ & $696.5$ & $0$  
\\
Wind-3m/s
& $582$ & $636$ & $0$
& $612.6$ & $631.3$& $0$
 \\
Wind-5m/s
& $577$ & $585$ & $0$
& $593.7$ & $615.7$ & $0$
 \\
Wind-8m/s
& $521$ & $508$& $0$  
& $524$ & $580$ & $0$  
\\
Deviation-0.3m
& $676.3$ & $703.2$ & $0$ 
&$616.3$ & $649$ & $0$ 
\\
Deviation-0.5m
& $606$ & $690.5$ & $0.60$ 
& $600.3$ & $643$ & $0.28$   
 \\ 
Deviation-0.8m
& $590$ & $593$ & $2.20$  
& $503$ & $551$ & $2.18$ 
\\
\cmidrule(r){1-1} \cmidrule(lr){2-4}\cmidrule(l){5-7} 
SITL-M
& $691.4$ & $700.8$ & $0 $
& $654$ & $705.7$ & $0 $ 

\\
\bottomrule
\end{tabular}
\vspace{-1.4em}
\end{table}

\section{Conclusion and Future Work}\label{sec:final}
In this work, towards accomplishing the CEFC task, we have proposed and validated a consensus inference-based hierarchical multi-agent reinforcement learning framework (CI-HRL).
Specifically, in low-level policy, we have implemented an alternative training MAPPO (AT-M) to satisfy multiple coupled constraints and better balance the trade-off between formation and navigation performance through dense random obstacles. Moreover, policy distillation has been adopted to achieve a more flexible adaptive formation.
Meanwhile, in high-level control, to infer global information from the partially observed local state and implicitly establish the consensus, ConsMAC, a consensus-oriented multi-agent communication methodology, has been designed in a novel supervised learning manner.
Finally, we have conducted extensive experiments in MPE and software-in-the-loop environments and successfully demonstrated the effectiveness and robustness of individual modules in CI-HRL. Moreover, the superiority of CI-HRL has been thoroughly validated.

Although CI-HRL has brilliant performance, there are still policy convergence challenges in super-large scale scenarios.
This limitation stems from the exponential growth of complexity for the centralized critic in the CTDE framework and the strict topological constraints in formation control. Potential solutions include fully decentralized training \cite{ma2024efficient}
or fractal-based hierarchical partitioning \cite{wu2024an}.
Meanwhile, we will carry out intense studies to further improve the stability of MARL framework, optimize communication overhead and develop methods to counter stronger adversaries. For example, to enhance UAV maneuverability in a 3D environment, we will study the incorporation of an altitude planner for altitude adjustments while maintaining swarm’s fundamental 2D behaviors.
Furthermore, recent advances in adversarial RL \cite{schott2024robust, yuan2023Robust} provide promising directions for designing adaptive adversaries that can dynamically adjust their strategies during training.  
Integrating such methods could enhance the robustness of CI-HRL against sophisticated adversarial tactics. 
\vspace{-.3cm}
\appendices
\begin{figure*}[!t]
\vspace{-2em}
\begin{equation}
\tag{24}
\label{eq:ss}
\begin{aligned}
    \nabla_{\Theta_H} J_{\mathrm{high}}(\Theta_H) &=\nabla_{\Theta_H}\sum_{\textbf{s}\in\mathcal{S}} P_{0}(\textbf{s}) V(\textbf{s}) 
    =\sum_{\textbf{s}\in\mathcal{S}} P_{0}(\textbf{s})\sum_{\textbf{s}^{\prime}\in\mathcal{S}}\sum_{t=0}^{T} \gamma^t P\left(\textbf{s} \rightarrow \textbf{s}^{\prime}, t, \bm{\pi}_{\Theta} \right) \sum_{\textbf{u}\in\mathcal{U}} \nabla_{\Theta_H} F(\textbf{u} | \textbf{s}^{\prime}) Q(\textbf{s}^{\prime}, \textbf{u})\\
    & = \sum_{\textbf{s}^{\prime}\in\mathcal{S}} \rho(\textbf{s}^{\prime})  \sum_{\textbf{u}\in\mathcal{U}} \nabla_{\Theta_H} F(\textbf{u} | \textbf{s}^{\prime}) Q(\textbf{s}^{\prime}, \textbf{u}) 
    = \sum_{\textbf{s}^{\prime}\in\mathcal{S}} \rho(\textbf{s}^{\prime})  \sum_{\textbf{u}\in\mathcal{U}} \sum_{\textbf{p}_a\in\mathcal{P}_a} \nabla_{\Theta_H} \left( \bm{\pi}_L(\textbf{u} | \textbf{s}^{\prime}, \textbf{p}_a) {\bm{\pi}}_{H}(\textbf{p}_a | \textbf{s}^{\prime}) \right) Q(\textbf{s}^{\prime}, \textbf{u})\\
    & = \sum_{\textbf{s}^{\prime}\in\mathcal{S}} \sum_{\textbf{u}\in\mathcal{U}} \sum_{\textbf{p}_a\in\mathcal{P}_a} \rho(\textbf{s}^{\prime})  \bm{\pi}_L(\textbf{u} | \textbf{s}^{\prime}, \textbf{p}_a) {\bm{\pi}}_{H}(\textbf{p}_a | \textbf{s}^{\prime}) \nabla_{\Theta_H} \ln \left({\bm{\pi}}_L(\textbf{u} | \textbf{s}^{\prime}, \textbf{p}_a){\bm{\pi}}_{H}(\textbf{p}_a | \textbf{s}^{\prime}) \right) Q(\textbf{s}^{\prime}, \textbf{u})\\
    & = \mathbb{E}_{\textbf{s}, \textbf{u}\sim \bm{\pi}_L, \textbf{p}_a \sim \bm{\pi}_{H}}\left[ \left(\nabla_{\textbf{p}_a} \ln {\bm{\pi}}_L(\textbf{u} | \textbf{s}, \textbf{p}_a) \nabla_{\Theta_H} {\bm{\pi}}_{H}(\textbf{p}_a | \textbf{s})+ \nabla_{\Theta_H} \ln {\bm{\pi}}_{H}(\textbf{p}_a | \textbf{s})  \right) Q(\textbf{s},\textbf{u}) \right]
\end{aligned}
\end{equation}
\hrulefill
\vspace{-.4cm}
\end{figure*}
\section{Details of the Reward Function Design}
\label{sec:specificreward}
In this part, we introduce the details of the reward function in Sec. \ref{sec:task_model}. 
For simplicity of representation, we denote the relative positions of agents in the group $k\in \{1,\cdots, \chi^{(t)}\}$ as $\textbf{P}^{(t)}_{k}=\{\textbf{p}_j^{(t)}-\bar{\textbf{p}}^{(t)}_k|\forall j \in \mathcal{N}_{k}\}$ where $\bar{\textbf{p}}^{(t)}_k$ is the center of the group and $\Delta_{n_k}$ is the target formation corresponding to $k$.

\subsubsection{Formation reward}\label{sec:formationreward}
Towards implementing leader-free formation control and enhancing the robustness, we adopt the HD \cite{pan2022flexible} to measure the formation error between the current and the expected formation. The HD between two topologies $\mathcal{E}_1$ and $\mathcal{E}_2$ is defined as $\mathrm{\HD}(\mathcal{E}_1,\mathcal{E}_2)=\max_{\mathbf{x}\in \mathcal{E}_1} \min_{\mathbf{y}\in \mathcal{E}_2}\|\mathbf{x}-\mathbf{y}\|$. The formation reward for each group can be obtained as
\begin{equation}
    \label{eq:rform}
    R_\mathrm{f}^{(t)} = -\sum\nolimits_{k=1}^{\chi^{(t)}} \mathrm{\HD}(\Delta_{n_k}, \textbf{P}^{(t)}_{k})  - \omega_l R_\mathrm{f}^{(t-1)},
\end{equation}
where $R_\mathrm{f}^{(0)}=0$ and $\omega_l$ is the formation lag coefficient to better reflect the formation trend.
    
\subsubsection{Navigation reward}\label{sec:nav_reward}
The navigation reward simply uses an urgency-weighted Euclidean distance between agents and targets, which can be given by
\begin{equation}
\label{eq:nareward}
    R_\mathrm{n}^{(t)}=-\sum\nolimits_{i\in \mathcal{N}} \sum\nolimits_{\mathfrak{T} \in \mathcal{T}}\kappa_{\mathfrak{T}}^{(t)}\|\textbf{p}_{i\to \mathfrak{T}}^{(t)}\|.
\end{equation}

\subsubsection{Task accomplishment reward}
The task accomplishment reward is awarded when each group $k$ reaches the target area and maintains the formation, and can be written as
\begin{equation}\label{eq:taskreward}R_\mathrm{t}^{(t)}  = \sum\nolimits_{\mathfrak{T} \in \mathcal{T}} \kappa_{\mathfrak{T}}^{(t)} \sum\nolimits_{k^*} \text{TR}_{k^*\to \mathfrak{T}}^{(t)},
\end{equation}
where
\begin{equation} \text{TR}_{k^*\to \mathfrak{T}}^{(t)} =
\begin{cases}
    n_{k^*} ,&\text{ if } \|\bar{\textbf{p}}^{(t)}_{k^*\to \mathfrak{T}}\|<\delta_{\mathrm{task}},\\
    0,&\text{otherwise}, 
\end{cases}
\end{equation}
with $\delta_{\mathrm{task}}$ denoting the radius of the target area, and $k^*\in \{1,\cdots, \chi^{(t)}\}$ indicates a group meeting a formation tolerance $\delta_{\mathrm{for}}$, i.e., $\mathrm{\HD}(\Delta_{n_{k^*}}, \textbf{P}^{(t)}_{k^*}) < \delta_{\mathrm{for}}$. 
Besides, the urgency factor $\kappa^{(t)}_{\mathfrak{T}}$ will gradually decay when the agents stay in target area $\mathfrak{T}$ as follows 
\begin{equation}
   \kappa^{(t+1)}_{\mathfrak{T}} = \max \left ( \kappa^{(t)}_{\mathfrak{T}} - \omega_d  \sum\nolimits_{k^*} \text{TR}_{k^*\to \mathfrak{T}}^{(t)}, 0 \right ),
\end{equation}
where $\kappa^{(0)}_{\mathfrak{T}}=1$ and $\omega_d$ is the decay factor.

\subsubsection{Evasion reward}
The evasion reward is designed to prevent the adversary, and can be formulated as,
\begin{equation}
    R_\mathrm{e}^{(t)}=-\sum\nolimits_{i\in \mathcal{N}} \max \left ( \delta_{\text{a,e}} - \| \textbf{p}_{i \to \mathfrak{A}}^{(t)}\|, 0 \right ) ,
\end{equation}
where $\delta_{\text{a,e}}$ is the alert distance of evasion.

\begin{table}[tbp]
    \vspace{-0.8em}
    \centering
    \caption{The parameter settings of the reward.}
    \vspace{-0.5em}
    \label{tab:reward}
    \begin{tabular}{c|c}
    \toprule
    \textbf{Parameters} & \textbf{Settings} \\
    \midrule
     Formation lag coefficient $\omega_l$ & $0.3$\\
     Decay factor $\omega_d$ & $0.003$\\
     Alert distance $\delta_{\text{a,e}}$, $\delta_{\text{a,c}}$ & ($2$ m, $0.5$ m)\\
     Formation tolerance $\delta_{\mathrm{for}}$ & $1$ m\\
     The radius of the target area $\delta_{\mathrm{task}}$ & $3$ m\\
     Minimum safety distance $\delta_{\mathrm{s}}$ & $0.2$ m \\
     Collision constants $\omega_{cr_1}, \omega_{cr_2}, C_1, C_2$ & ($24, 8, 3, 1$) \\
    \bottomrule
    \end{tabular}
    \vspace{-1.em}
\end{table}

\subsubsection{Collision avoidance reward}
The collision avoidance reward, which is designed to prevent obstacles, can be formulated as a summation of individual rewards, namely
\begin{equation}\label{eq:coll}R_\mathrm{c}^{(t)}=-\sum\nolimits_{i\in \mathcal{N}}\sum\nolimits_{j \ne i, j\in \mathcal{I}} \text{CR}_{i\to j}^{(t)},
\end{equation}
where $\mathcal{I}$ is the set of agents and obstacles, and for the minimum safety distance $\delta_{\text{s}}$ and collision alert distance $\delta_{\text{a,c}}$, 
\begin{equation} \label{eq:collsion_reward} \text{CR}_{i\to j}^{(t)} =
\begin{cases}
 \omega_{cr_1} (\delta _{\text{s}}-d_{i,m}^{(t)}) + C_1, & d_{i,m}^{(t)}<\delta _{\text{s}},\\
 \omega_{cr_2} (\delta_{\text{a,c}}-d_{i,m}^{(t)}) + C_2, & \delta _{\text{s}} < d_{i,m }^{(t)}<\delta_{\text{a,c}},\\
 0, & d_{i,m}^{(t)}>\delta _{a},
\end{cases} \\
\end{equation}
with $d_{i,m}^{(t)}=\min (d_{i1}^{(t)},...,d_{iM}^{(t)})$ related to LiDAR and positive constants $\omega_{cr_1}$, $\omega_{cr_2}$, $C_1$, $C_2$.

Furthermore, we have summarized the parameter settings of the reward in Table \ref{tab:reward}.

\section{The Proof of Theorem \ref{thm:highpolicygradient}}
\label{sec:proof_thm:highpolicygradient}

\begin{proof} 
The proof follows the policy gradient theorem\cite{sutton1999policy} and is similar to the bi-level optimization in reward shaping\cite{hu2020learning}. Based on the stochasticity of $\bm{\pi}_{H}$ and $\bm{\pi}_{L}$, there exist three cases. When both $\bm{\pi}_{H}$ and $\bm{\pi}_{L}$ are stochastic, let $F(\textbf{u} | \textbf{s}) = \sum_{\textbf{p}_a\in\mathcal{P}_a} \bm{\pi}_L(\textbf{u} | \textbf{s}, \textbf{p}_a) {\bm{\pi}}_{H}(\textbf{p}_a | \textbf{s}) $ denote the probability of taking action $\textbf{u}$ in state $\textbf{s}$.
At each time step $t$, the state-value function and the state-action value function can be written as
\begin{align}
&V(\textbf{s})=\sum_{\textbf{u}\in\mathcal{U}} F(\textbf{u} | \textbf{s}) Q(\textbf{s}, \textbf{u}),\nonumber\\
&Q(\textbf{s}, \textbf{u}) =  \textbf{R}_\mathrm{H}(\textbf{s}, \textbf{u}) +\gamma \sum_{\textbf{s}^{\prime}} P\left(\textbf{s}^{\prime}|\textbf{s}, \textbf{u}\right) V (\textbf{s}^{\prime}).\nonumber
\end{align}
Therefore, according to the policy gradient theorem\cite{sutton1999policy}, the gradient with respect to the high-level policy can be calculated by expanding $Q(\textbf{s}, \textbf{u})$,
\begin{align}
&\nabla_{\Theta_H} V(\textbf{s}) = \nabla_{\Theta_H}\left(\sum_{\textbf{u}\in\mathcal{U}} F(\textbf{u} | \textbf{s}) Q(\textbf{s}, \textbf{u}) \right)  \label{eq:valuegradient}\\
&=\sum_{\textbf{s}^{\prime}\in\mathcal{S}}\sum_{t=0}^{T} \gamma^t P\left(\textbf{s} \rightarrow \textbf{s}^{\prime}, t, \bm{\pi}_{\Theta}\right) \sum_{\textbf{u}\in\mathcal{U}} \nabla_{\Theta_H} F(\textbf{u} | \textbf{s}^{\prime}) Q(\textbf{s}^{\prime}, \textbf{u}), \nonumber
\end{align}
where  $P\left(\textbf{s} \rightarrow \textbf{s}^{\prime}, t, \bm{\pi}_{\Theta}\right)$  is the probability that state $\textbf{s}^{\prime}$ is visited after $t$ steps from state $ \textbf{s}$ under the joint policy  $\bm{\pi}_{\Theta}$ . 
Let $\rho(\textbf{s}^{\prime}) = \sum_{\textbf{s}\in\mathcal{S}} P_{0}(\textbf{s})\sum_{t=0}^{T} \gamma^t P\left(\textbf{s} \rightarrow \textbf{s}^{\prime}, t, \bm{\pi}_{\Theta}\right)$ be the discounted state distribution, where $P_{0}$ denotes the probability distribution of initial states. By using the log-derivative trick and the chain rule, we can obtain \eqref{eq:ss}. Then, the conclusion comes. 

When either $\bm{\pi}_{H}$ or $\bm{\pi}_{L}$ are stochastic, the proofs can be similar and omitted for brevity.
Thus, we prove the theorem.
\end{proof}
\bibliographystyle{IEEEtran}
\bibliography{reference}

\end{document}